\newcommand{\MMA}{\textsf{MMA}}
\newcommand{\update}{{\small \textsf{updt}}}
\newcommand{\detect}{{\small \textsf{dtct}}}   
\newcommand{\sub}{{\small \textsf{rstrct}}} 
\newcommand{\adjust}{{\small \textsf{adjst}}}
\newcommand{\announce}{{\small \textsf{annce}}}
\newcommand{\Apub}{\ensuremath{A_{\textsf{pub}}}} 
\newcommand{\Rpub}{\ensuremath{R_{\textsf{pub}}}} 
\newcommand{\om}{{\small \textsf{om}}}
\newcommand{\sem}{{\small \textsf{Sem}}} 
\newcommand{\semsmall}{{\small \textsf{sem}}}
\newcommand{\getArg}{{\small \textsf{getArg}}}
\newcommand{\getR}{{\small \textsf{getR}}}
\newcommand{\revV}{{\small \textsf{revV}}}
\newcommand{\Fg}{\ensuremath{F^{\Dung}}_{\textsf{G}}}
\newcommand{\Fpub}{\ensuremath{F^{\Dung}}_{\textsf{pub}}}  
\newcommand{\pub}{\textsf{pub}}
\newcommand{\fe}{\ensuremath{h_{\textbf{E}}}}  
\newcommand{\ve}{\ensuremath{v_{\textbf{E}}}}
\newcommand{\fleq}{\ensuremath{f_{\leq_{\textbf{p}}}}} 
\newcommand{\fleqinter}{\ensuremath{r_{\leq_{\textbf{p}}}^{\textsf{inter}}}} 
\newcommand{\fleqintra}{\ensuremath{\fleq^{\textsf{intra}}}}
\newcommand{\fa}{\ensuremath{f_{\textbf{A}}}} 
\newcommand{\fsem}{\ensuremath{f_{\semsmall}}}
\newcommand{\gsem}{\ensuremath{g_{\semsmall}}}
\newcommand{\co}{\textsf{co}}
\newcommand{\pr}{\textsf{pr}}
\newcommand{\gr}{\textsf{gr}}
\newcommand{\Dung}{\textsf{D}}
\newcommand{\Sem}{\textsf{Sem}}
\newcommand{\hide}[1]{} 
\newcommand{\orC}{\textsf{or}}
\newcommand{\notC}{\textsf{not}}
\newcommand{\andC}{\textsf{and}}
\definecolor{darkGray}{RGB}{64,64,64}
\begin{document}
     \title{Formulating  
     Manipulable Argumentation\\
     with Intra-/Inter-Agent Preferences} 

\author{\institute{}}
\author{Ryuta Arisaka \and Makoto Hagiwara \and 
    Takayuki Ito
    \institute{Nagoya Institute of Technology, Nagoya, Japan\\
    email: ryutaarisaka@gmail.com, lastname.firstname@nitech.ac.jp}} 
    \maketitle 
\vspace{-0.6cm} 
\begin{abstract}   
  From marketing to politics, 
  exploitation of incomplete information 
  through 
  selective communication of arguments  
  is ubiquitous. 
  In this work, we focus on development of 
  an argumentation-theoretic model 
  for manipulable multi-agent argumentation, 
  where each agent may transmit deceptive 
  information to others for tactical motives. 
  In particular, we study characterisation of 
  epistemic states, and their roles in deception/honesty detection 
  and (mis)trust-building.   
  To this end, we propose the use of intra-agent preferences to 
  handle deception/honesty
  detection and inter-agent preferences to 
  determine which agent(s) to believe in more. 
  We show how deception/honesty in an argumentation of an 
  agent, if detected, 
  would alter the agent's perceived trustworthiness, 
  and how that may affect their judgement as to which arguments 
  should be acceptable. 
\end{abstract} 
\vspace{-0.6cm} 
\section{Introduction}\label{section_introduction} 
 To adequately characterise multi-agent argumentation, 
 it is important to 
 model what an agent 
 sees of other agents' 
 argumentations ({\it Epistemic 
 Aspect}). 
 It is also important
 to model 
 how agents interact with 
  others 
 ({\it Agent-to-Agent
  Interaction}). 
  These two factors determine dynamics of 
  multi-agent argumentation, and are thus 
   central to: argumentation-based negotiations 
 (Cf. two surveys 
 \cite{Rahwan03,Dimopoulos14});  
 strategic dialogue games that 
 may involve 
 opponent modelling, 
e.g. \cite{Kakas05,Sakama12,Grossi13,Rahwan09,Parsons05,Riveret08,Hadjinikolis13,Hadoux15,Hadoux17} 
(Cf. also \cite{Thimm14} for a survey till 2013-2014); 
 defence outsourcing \cite{ArisakaBistarelli18}; and 
 synchronised decision-adjustment by agents 
  \cite{Rienstra11,AST17,Thimm14,Rienstra13}. 
  In this work, we consolidate them 
  for manipulable multi-agent argumentation, 
  where an agent may 
 announce to other agent(s) any argumentations he/she sees 
 fit for his/her tactical motives - including 
 potentially false ones. Despite 
 their importance in real-life 
 argumentation, 
 very few attempts 
 at
 modelling manipulable or deceptive argumentation 
 \cite{Takahashi16,Sakama12,Kuipers10,Kontarinis15}
 currently exist 
 in the literature of formal argumentation. 
 Agents' epistemic states, their
 roles in deception detection, and  their impacts on agent-to-agent
 interactions are still to be 
 studied further, especially in non-two-party multi-agent argumentation, 
  e.g. \cite{Grossi13,ArisakaBistarelli18,ArisakaSatoh18}, 
  which, too, is scarcely covered.  
\vspace{-0.4cm} 
\subsubsection{Exploitation 
is ubiquitous under incomplete information.} \label{subsection_information_manipulation}
 While honesty may be a moral virtue 
 from ancient times, 
  greater strategic advantages can be obtained in real life 
 by withholding disadvantageous information \cite{Sakama12,Hadjinikolis13,Rahwan08}, 
 half-truths \cite{Chomsky10}, and 
 through various other tactical ruses. In certain 
 circumstances, 
 even outright bluffing may work to 
 one's advantage if others do not detect 
 it, as seen in the game of Poker
 where a player is generally
 uncertain about the hands of the others, 
 or in the game of Mafia where, furthermore,  
 a player may not be
 certain of the role played 
 by another player - in particular, whether he/she is or 
 is not his/her opponent (see \url{https://en.wikipedia.org/wiki/Mafia\_(party\_game)} for detail). 
    
 For illustration of our formalism, 
 in this paper we draw examples from an end game of Mafia 
 involving  3 agents separated into two teams, Team Mafia and Team 
 Innocent, each trying to eliminate a player in the opponent 
 team for a win. 
 There is at least one player who does not know 
 which of the other players is in the same team, 
 so the other players need to convince the unsure 
  player through argumentations. Unlike the setting in \cite{Grossi13}, 
 however, the unsure player(s) are not bystanders; 
they are as much of a participant as the other(s) are.  
Moreover, 
 at least one of the two other players 
 is also uncertain about the role of the other players. 
 This game thus presents a great opportunity for 
 manipulable argumentation. Exact description is given in Section~\ref{section_motivation}.   
\vspace{-0.3cm} 
\subsubsection{Contributions}    
  are in the theory of formal   
  argumentation. Due to space limit, 
  experiments on strategies under protocol 
  are left to an extended work.  
  We present a 
  (generally non-two-party) manipulable multi-agent 
  argumentation with: agents' epistemic states; 
 {\it intra-agent preference relations}; and 
 {\it inter-agent preference relations}, for 
 characterising (1) detection of deception and honesty
 and (2) their impacts on perceived trustworthiness. 
 While deception detection in argumentation is discussed in 
 \cite{Sakama12,Takahashi16}, several assumptions are made 
 such as attack-omniscience (every agent
 knows every attack among the arguments it is aware of), 
 absence of ``recursive knowledge \cite{Oren09} (which 
 is standard in epistemic 
  logic \cite{Hendricks06})'' in general, and so on. 
  As we are to illustrate 
 in detail later in Section~\ref{section_motivation},  
 there are situations not handled by 
  the detection approach of 
  \cite{Sakama12}. With the two types of 
  preference relations, our formal model 
  {\it allows
   each agent to reason differently} when detecting 
  deception/honesty and when deciding which arguments 
  to publicly accept, which helps {\it refine deception detection}
  in the previously studies, and which, moreover, allows 
  the {\it impacts of deception/honesty on agents' perceived trustworthiness} 
  to be expressed within it. To the best of 
  our knowledge, these generalisations have not been
  undertaken. In view of research interests 
 in this game \cite{Hung10,Toriumi16}, our work should have a wider 
  implication outside formal argumentation.
   
\vspace{-0.2cm} 
\section{Technical Preliminaries} 　 
\vspace{-1.1cm}  
\subsubsection{Abstract argumentation}
\label{subsection_abstract_argumentation}  
considers 
an argumentation as a graph where a node 
is an argument and an edge is an attack 
of the source argument on the target argument.\cite{Dung95} 
Let $\mathcal{A}$ denote 
the class of abstract entities that we understand 
as arguments, then 
a (finite) Dung argumentation is a pair $(A, R)$ 
with $A \subseteq_{\text{fin}} \mathcal{A}$ 
and $R \subseteq A \times A$. 
We denote the class of all Dung argumentations 
by $\mathcal{F}^{\Dung}$. From here on, 
we denote: a member of $\mathcal{A}$ by $a$; 
a finite subset of $\mathcal{A}$ by $A$; 
and a member of $\mathcal{F}^{\Dung}$ 
by $F^{\Dung}$, all with or without a subscript. 
For any $(A, R) \equiv F^\Dung$, 
we denote by $2^{F^{\Dung}}$ 
the following set: $\{(A_1, R_1) \ | \ 
A_1 \subseteq A \ \andC \ R_1 \subseteq R \cap (A_1 \times A_1)\}$, 
i.e. all sub-Dung-argumentations of $F^{\Dung}$.\footnote{``$\andC$'' 
instead of ``and'' is used when the context in which 
the word appears strongly indicates classic-logic truth-value 
comparisons. Similarly for $\orC$ (disjunction) 
and $\notC$ (negation).}  

Assume that the following notations are for any chosen 
$(A, R) \in \mathcal{F}^{\Dung}$. $a_1 \in A$ 
is said to attack $a_2 \in A$ if and only if, or iff, $(a_1, a_2) \in R$. 
$A_1 \subseteq A$ is said to be conflict-free iff 
there is no $a_1, a_2 \in A$ such that $(a_1, a_2) \in R$. 
$A_1 \subseteq A$ is said to defend $a_x \in A$ 
iff every $a_y \in A$ attacking $a_x$ is attacked by 
at least one member of $A_1$. 
$A_1 \subseteq A$ is said to be: 
admissible iff $A_1$ is conflict-free and defends all its 
members; complete iff $A_1$ is admissible and 
includes every argument it defends; 
preferred iff $A_1$ is a maximally complete set; 
and grounded iff $A_1$ is the set intersection of 
all complete sets. 
Let $\Sem$ be $\{\co, \pr, \gr\}$, 
and let $\Dung: \Sem \times \mathcal{F}^{\Dung} \rightarrow 
2^{2^{\mathcal{A}}}$ be such that: 
$\Dung(\co, (A, R))$ is the set of 
all complete sets of $(A, R)$; 
$\Dung(\pr, (A, R))$ is the set of 
all preferred sets of $(A, R)$; 
and $\Dung(\gr, (A, R))$ is the set of 
all grounded sets of $(A, R)$. 
$\Dung(\co, (A, R))$, $\Dung(\pr, (A, R))$ and $\Dung(\gr, (A, R))$
are called the complete semantics, the preferred semantics and 
the grounded semantics of $(A, R)$. 
Clearly $|\Dung(\gr, (A, R))| = 1$. 
There are other semantics, and an interested 
reader is referred to  
 \cite{Baroni07} for an overview. 
For a chosen $\semsmall \in \Sem$, $a_1 \subseteq A$ 
is said to be: credulously acceptable iff  
there exists some $A_1 \in \Dung(s, (A, R))$ such that 
$a \in A_1$; and skeptically acceptable 
iff $a \in A_1$ for every $A_1 \in \Dung(s, (A, R))$. 
We may simply write $a (\in A)$ is acceptable in $\semsmall \in \Sem$ 
when $a$ is at least credulously acceptable 
in $\semsmall \in \Sem$. 
\subsubsection{Attack-reverse preference.}  \label{subsection_attack_reverse_preference}  
Suppose $(\{a_1, a_2\}, 
\{(a_1, a_2)\})$ with two arguments and 
an attack. 
For any member $\semsmall$ of $\Sem$,  
we obtain that $a_1$ but not $a_2$ is acceptable. 
Suppose, however,  that 
some agent observing 
this argumentation still prefers 
to accept $a_2$. 
The agent could conceive 
an extension of 
this argumentation, 
$(\{a_1, a_2, a_3\}, \{(a_1, a_2), 
(a_3, a_1)\})$ with 
some argument $a_3$: 
{\it I (= the agent) doubt it 
in the absence of any evidence.}, which attacks
$a_1$. For  $\semsmall \in \Sem$, $a_2$ (and $a_3$) but not $a_1$ then become 
acceptable. 

The same effect can be achieved 
without any auxiliary argument 
if we apply attack-reverse 
preference \cite{Amgoud14}.  
Assume a partial order $\leq_p$ over $A$ 
in some $(A, R)$,  
then 
$R' \subseteq A \times A$ is said to 
be $\leq_p$-adjusted $R$ 
iff it is the least set that 
satisfies the following 
conditions.  We assume that $a_1 <_p a_2$ iff 
$a_1 \leq_p a_2$ $\andC$ $\notC\ a_2 \leq_p a_1$.   
\vspace{-0.1cm} 
{\scriptsize  
\begin{itemize} 
   \item   $(a_1, a_2) \in R'$ 
   if 
   $(a_1, a_2) \in R$ 
    $\andC$ ($\notC$ 
        $a_1 <_p a_2$).  \quad\qquad 
    $-${\ }  $(a_2, a_1) \in R'$ 
    if $(a_1, a_2) \in R$ 
      $\andC$ $a_1 <_p a_2$. 
\end{itemize}  
}
\noindent By setting $\leq_p$ to be such that 
$a_1 <_p a_2$ in 
$(\{a_1, a_2\}, \{(a_1, a_2)\})$,
it is easy to see that 
$\leq_p$ expresses 
the agent's preference: 
under $\leq_p$-adjusted $\{(a_1, a_2)\}$, 
which is $\{(a_2, a_1)\}$, a semantics with 
some $\semsmall \in \Sem$ 
makes $a_2$ but not $a_1$ acceptable.  
$(A, R, \leq_{p})$ is said to be  
a Dung argumentation $(A, R)$ with 
a preference $\leq_p$.  
\vspace{-0.3cm} 
\subsubsection{Agent argumentation with 
epistemic functions.}\label{subsection_agent_argumentation}

Let $\mathcal{E}$ be a class of  
abstract entities that we understand as agents.  
Let $e$ refer to 
a member of  $\mathcal{E}$, and let $E$
refer to 
a finite subset of $\mathcal{E}$, each with or without 
a subscript.  
Meanwhile, 
let $\getArg: \mathcal{F}^{\Dung} \rightarrow 
      2^{\mathcal{A}}$ and 
    $\getR: \mathcal{F}^{\Dung} \rightarrow 
      2^{\mathcal{A} \times \mathcal{A}}$ be such that 
     $\getArg((A, R)) = A$, and that $\getR((A, R)) = R$ 
      for any $(A, R) \in \mathcal{F}^{\Dung}$. 
Then an 
agent argumentation with 
agents' local scopes and an epistemic 
function indicating their knowledge  
is expressed by 
$(F^{\Dung}, E, \fe, \fa, \fsem)$ where: 
$\fe: E \rightarrow (2^{F^{\Dung}} 
\backslash (\emptyset, \emptyset))$
is such that $\getR(\fe(e)) = \getR(F^{\Dung}) \cap (\getArg(\fe(e)) \times 
\getArg(\fe(e)))$, 
and that 
$\getArg(\fe(e_1)) \cap \getArg(\fe(e_2))  = \emptyset$ 
if $e_1 \not= e_2$; and where  
$\fa: E \rightarrow (2^{F^{\Dung}} \backslash (\emptyset, \emptyset))$ 
is 
such that $\fe(e) \in 2^{\fa(e)}$, and that 
$\getR(\fa(e)) \cap (\getArg(\fe(e)) \times \getArg(\fe(e))) 
= \getR(\fe(e))$ 
for $e \in E$. 
Meanwhile, $\fsem: E \rightarrow \sem$ indicates 
the type of semantics, $\fsem(e)$, which the agent $e$ adopts 
when computing acceptability semantics. 
The purpose of $\fe$ 
is to express agents' local scopes. 
$\fa(e)$ is the argumentation
$e$ is aware of, 
which naturally subsumes $\fe(e)$, 
and the attacks
in $\fa(e)$ match $\getR(\fe(e))$ 
exactly as far 
as $\getArg(\fe(e))$ are concerned. 
$\fa(e)$ for $e \in E$ is called 
local agent argumentation of $e$ in the global argumentation 
$F^{\Dung}$.
   \vspace{-0.2cm} 
    \section{Motivation for Epistemic 
    States and 
    Agent Preferences}\label{section_motivation}  
  We draw examples from an end game of Mafia.
  The setting is as follows with 3 agents left.\\

{\small 
 {\it Common knowledge among them.} 
 One agent is a killer, and 
 the other two agents are  
 civilians, of which 
 at most one can be a detective -   
 no player but detective itself, if in the game, knows for certain 
 that there 
 is a detective. 
 Team Mafia comprises just the killer, 
 and Team Innocent consists of the civilians.

 {\it Agents' knowledge.} 
 All three of them 
 know which role they have been assigned to. 
 Killer knows that the other two are 
 not a killer.  Detective, if in 
 Team Innocent, knows who the ordinary civilian 
 (to be simply described civilian hereafter) 
 and who the killer are by its ability. 
 It also knows the killer knows it is the killer, 
 and that the civilian knows it is a civilian. 
  However, 
 no civilians know the role of the other players.  
 
 {\it Argumentations.} Each agent may 
 entertain argumentations generally  consisting 
 of a set of arguments, e.g. ``Agent $e_1$ is Killer.'', 
 and attacks among them. 
 They may also announce argumentations publicly. 
 Arguments and attacks 
 in a public announcement may not be actual, e.g.  
 even if ``$e_2$ is Killer"
   is just a guess or known to be untrue to $e_3$, $e_3$ may 
  still put the argument forward, and similarly 
  for an attack. 
 Any argumentation announced publicly is known to 
 every agent. \\
 ${\ }\quad$ In the end, each agent chooses with its 
 own semantics  
 (that is, its own judgement criteria 
 to decide which arguments to accept)
 who the killer to be hanged is. 
 If there is an agent chosen 
 by the other two, then  
 the team the chosen agent belongs to loses, and 
 the team the chosen agent does not belong to wins. It is 
 everybody's interest to let its team win, to which end 
 they thus conduct argumentation. \\
}

\noindent Presuming this setting, we motivate  
epistemic states and intra-/inter-agent preferences, 
and how they are used for: deception/honesty detection; 
and updates on agent-to-agent trusts.  
\subsection{Epistemic states} \label{subsection_highlight1}     
\vspace{-0.1cm} 
 \begin{center}    
     \includegraphics[scale=0.11]{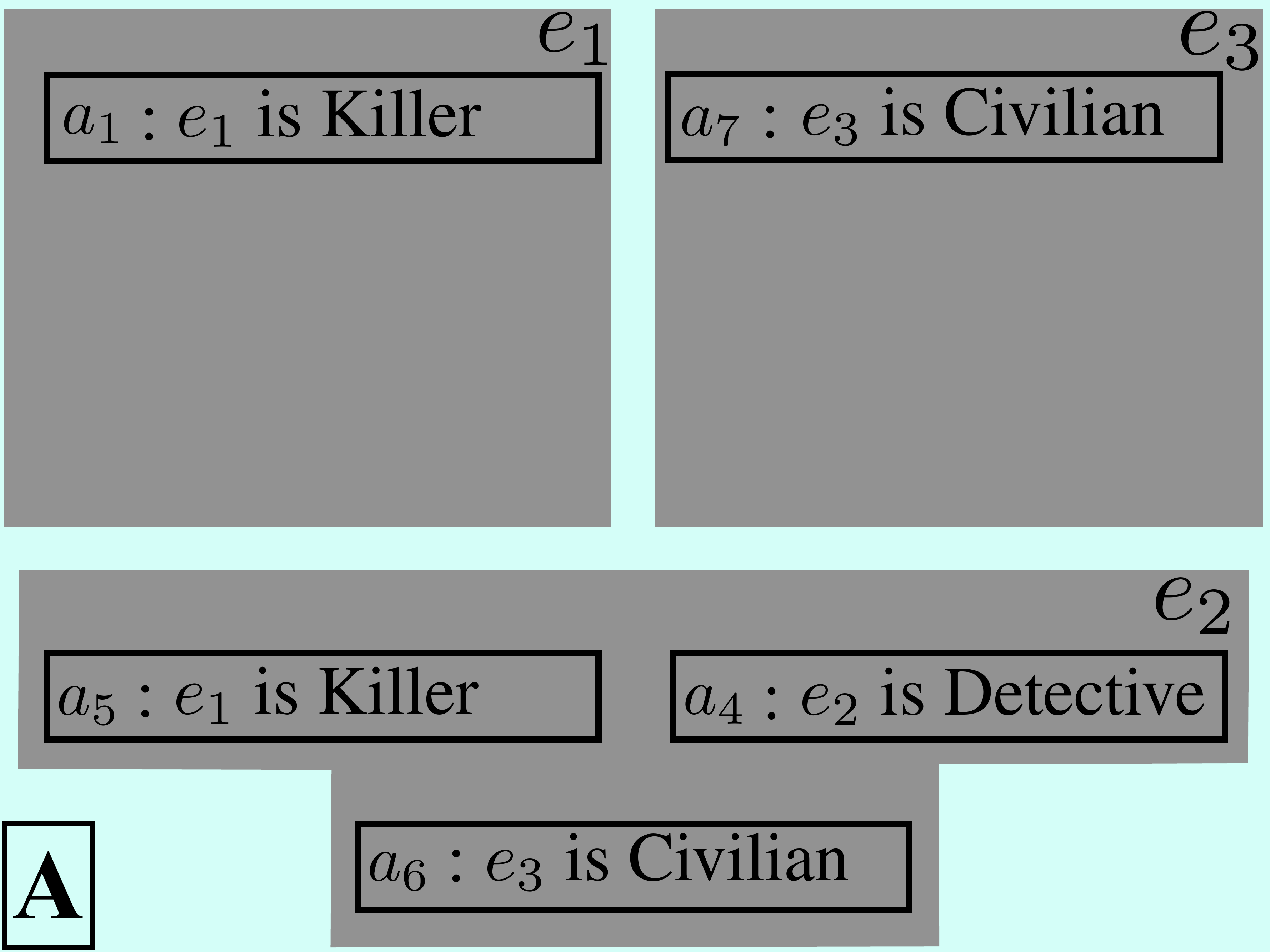}
     \includegraphics[scale=0.11]{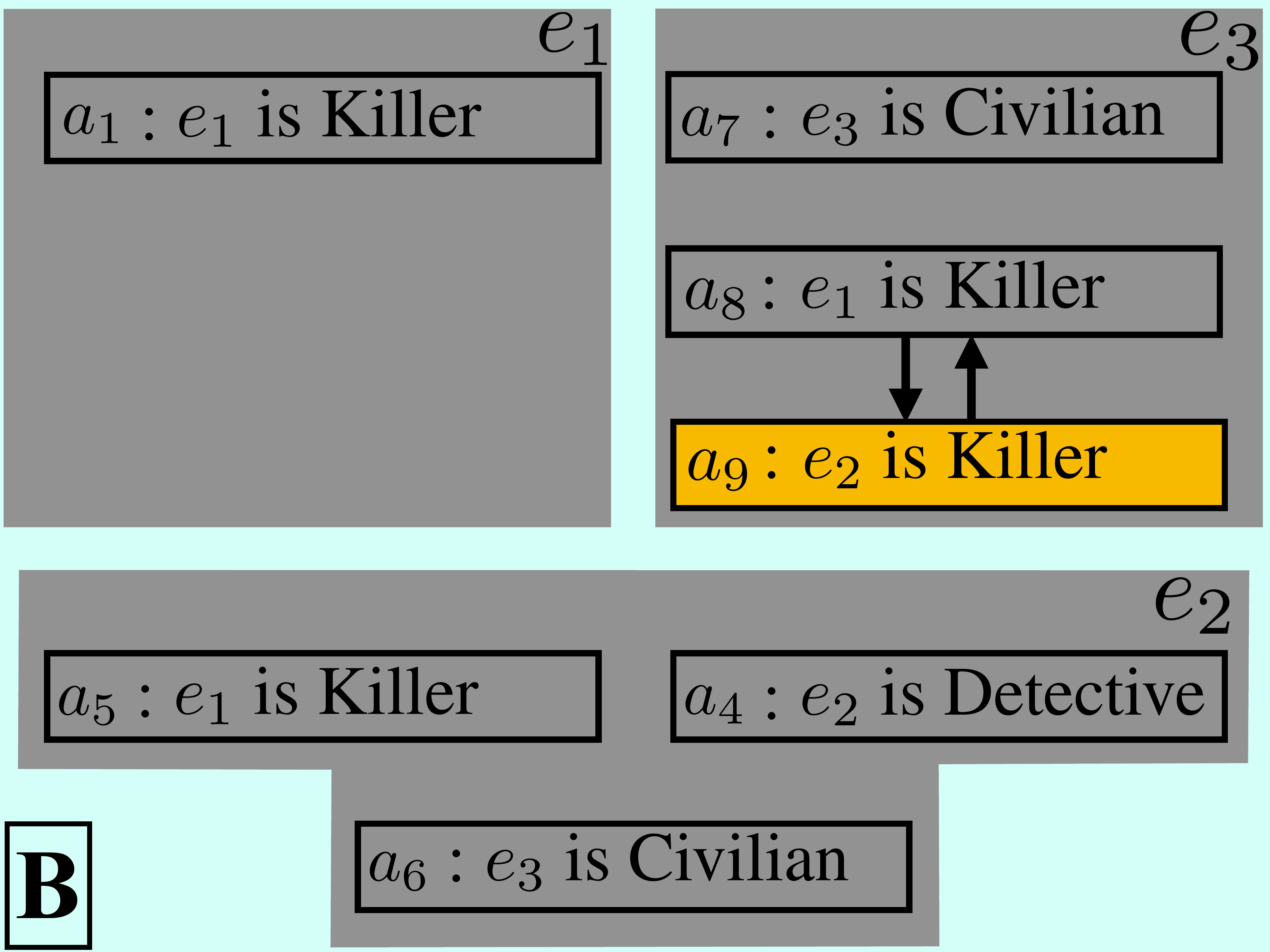}
     \includegraphics[scale=0.11]{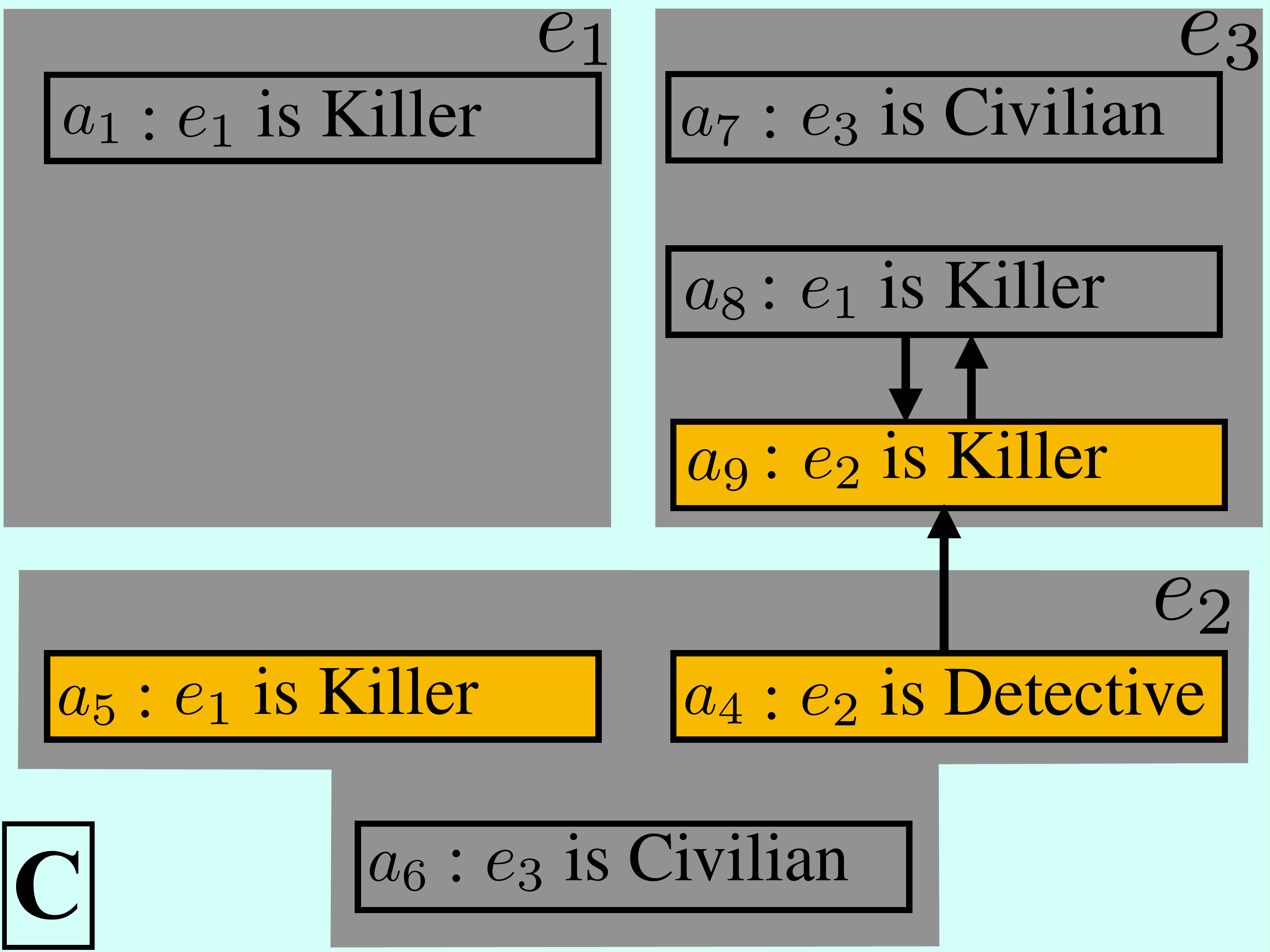}
     \includegraphics[scale=0.11]{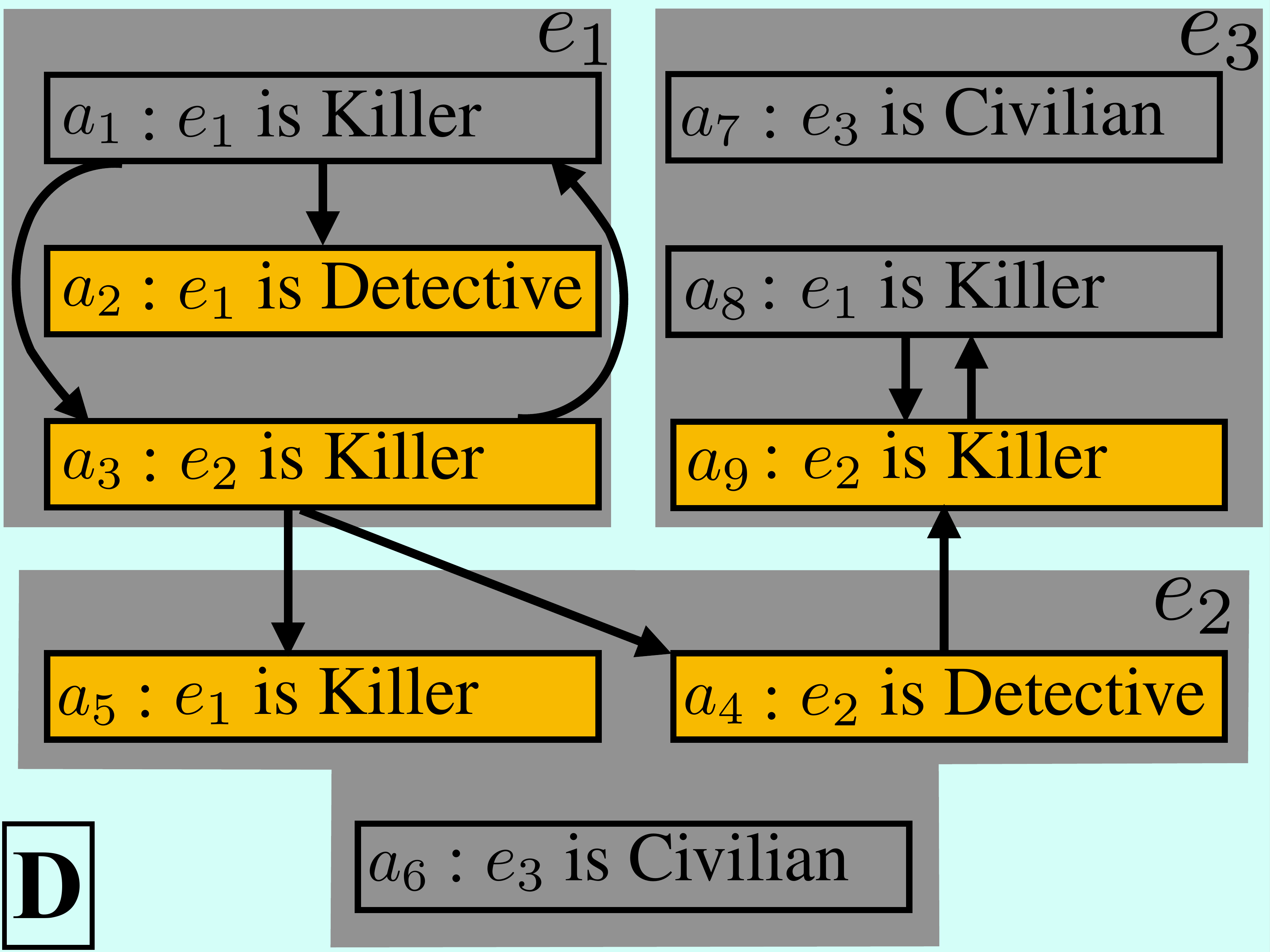}
     \includegraphics[scale=0.11]{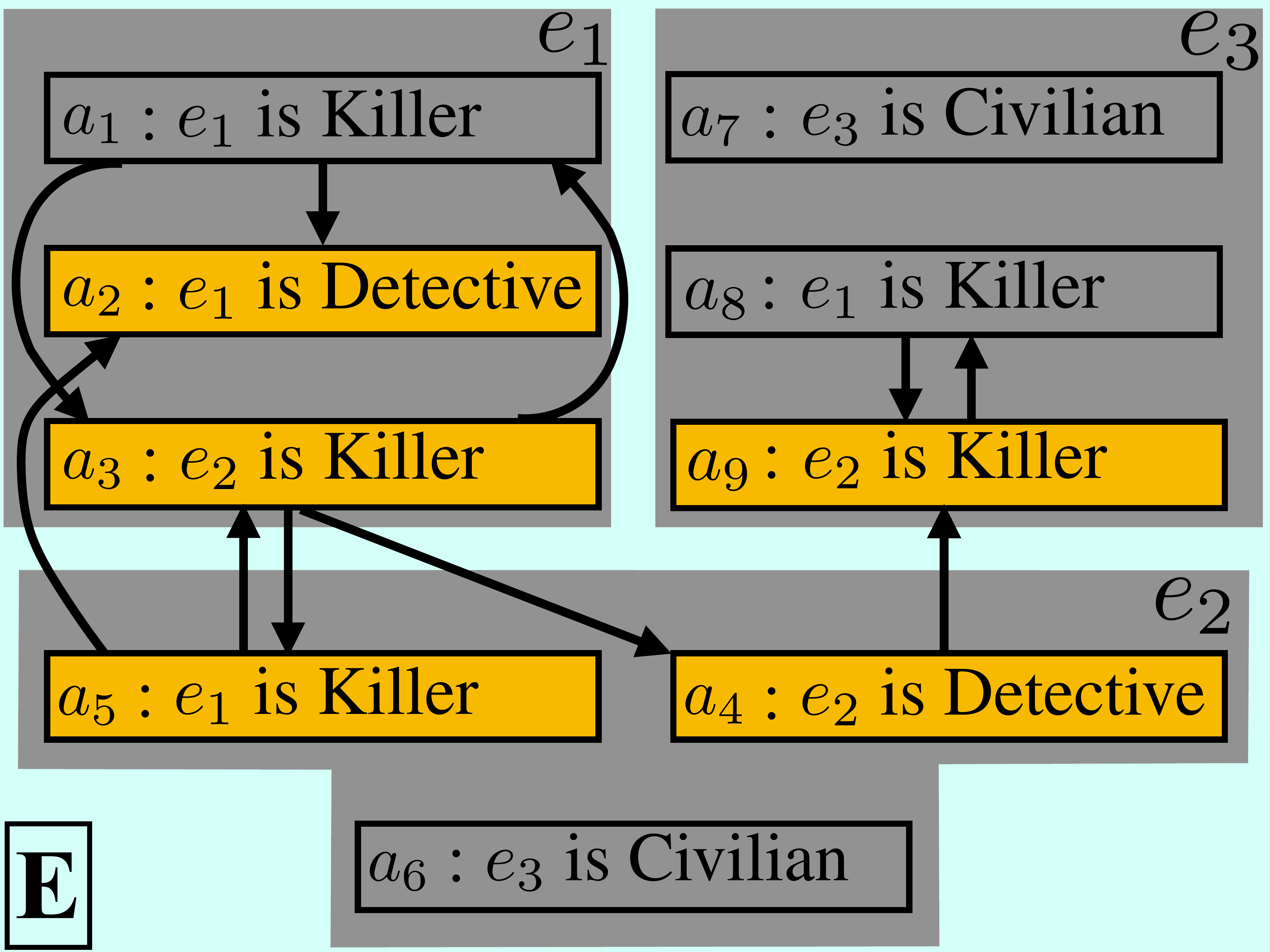}
     \includegraphics[scale=0.11]{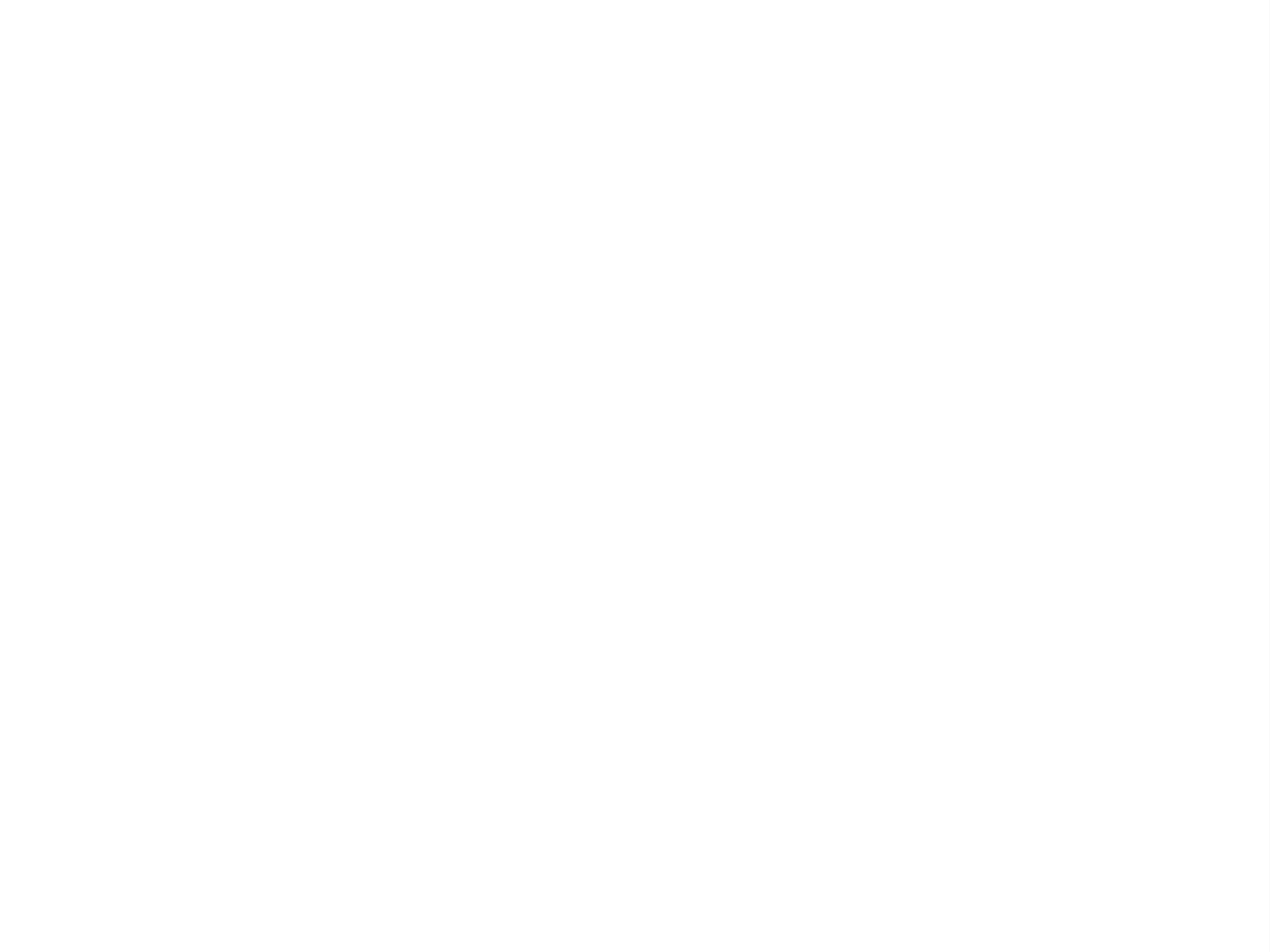}
 \end{center}  
 \noindent Suppose $e_1$ is Killer, $e_2$ 
 is Detective and $e_3$ is (ordinary) Civilian.   
 Their knowledge at the beginning   
 of the end game is as follows, as visualised in \fbox{A} 
 with agents' local scopes.  
\vspace{-0.1cm}
{\small 
  \begin{tcolorbox} [top=0.1mm,bottom=0.1mm]
    \textbf{Initial knowledge.}    
    Argument $a_1$: ``$e_1$ is Killer'', is in $e_1$'s scope. Argument $a_4$: ``$e_2$ is Detective'', is in $e_2$'s scope. Argument 
    $a_7$: ``$e_3$ is Civilian'', is in $e_3$'s 
    scope. By Detective's ability,  
    that ``$e_1$ is Killer'', and 
    that ``$e_3$ is Civilian'' are known to 
    $e_2$, which thus appear in $e_2$'s local scope.  
    Clearly, $e_2$ also knows that $a_1$ is known to $e_1$ and that
    $a_7$ is known to $e_3$. 
  \end{tcolorbox} 
  }  
\vspace{-0.1cm} 
\noindent Now, suppose a sequence of argumentations by them as follows. 
At each 
step, an agent publicly announces an argumentation (argument(s), 
attack(s)). Publicly announced arguments 
are coloured brighter in all figures.  
 We graphically represent $(a_1, a_2) \in R$  
 by $a_1 \rightarrow a_2$. 
\vspace{-0.1cm} 
 {\small 
 \begin{tcolorbox}[top=0.1mm,bottom=0.1mm]
 \begin{description}  
    \item[1. \normalfont \it $e_3$ says:] 
    ``$e_2$ is Killer'' (argument $a_9$). It is $e_3$'s guess, in mutual conflict with an alternative: 
    ``$e_1$ is Killer'' (argument $a_8$). See {\small \fbox{B}}.  
    \item[2. \normalfont \it $e_2$ says:]
      ``$e_2$ is Detective'' ($a_4$) as 
      a counter-argument to $a_9$, and  
      then that ``$e_1$ is Killer'' 
      ($a_5$). See {\small \fbox{C}}.  
     \item[3. \normalfont \it  $e_1$ responds:] 
         ``$e_1$ is Detective'' 
         \mbox{(argument $a_2$)}, 
         and  (i.e. due to ability 
         of Detective) that
         ``$e_2$ is Killer'' \mbox{(argument $a_3$)}, 
    as a counter-argument to $a_4$ and $a_5$. See 
{\small \fbox{D}}. $e_1$ is aware that $a_3$ is actually 
    in mutual conflict with $a_1$ as well as 
      that $a_2$ is attacked by $a_1$. 
     \item[4. \normalfont \it $e_2$ insists:] 
       ``$e_1$ is Killer'' 
         ($a_5$) 
         as a counter-argument to $a_3$ and $a_2$. See {\small \fbox{E}}. 
 \end{description} 
 \end{tcolorbox} 
 }
\vspace{-0.1cm} 
 
 \noindent \textbf{Local agent argumentations.}  Each agent 
  sees all publicly announced arguments 
  together with any other arguments 
  it knows \cite{Kakas05,Hadoux17}, thus, for {\small \fbox{E}}, 
  we have {\small \fbox{E1}}, {\small \fbox{E2}} and {\small \fbox{E3}} 
  as the local argumentations of 
  $e_1$, $e_2$ and respectively $e_3$. 
  \begin{center}
            \includegraphics[scale=0.11]{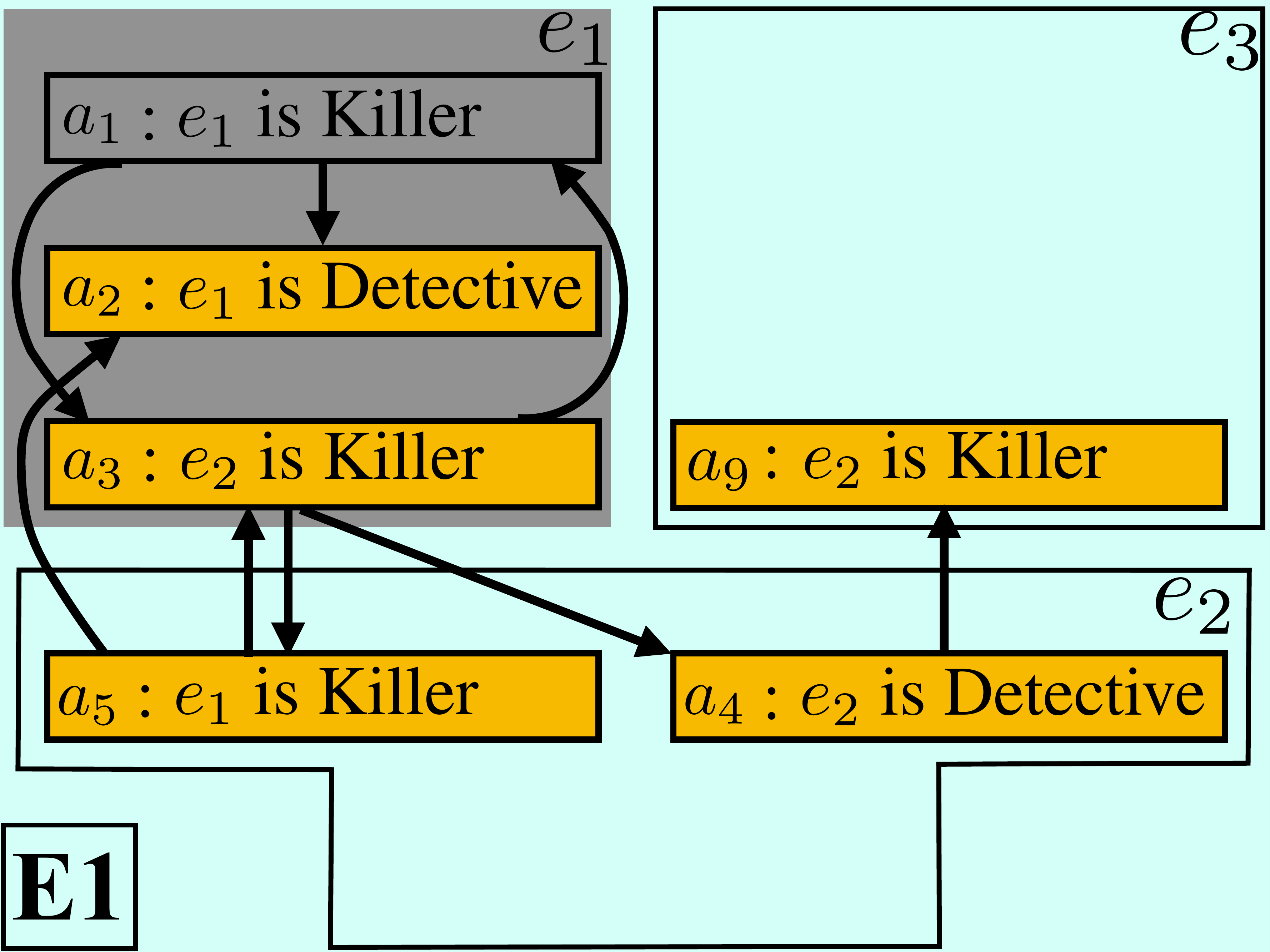} 
   \includegraphics[scale=0.11]{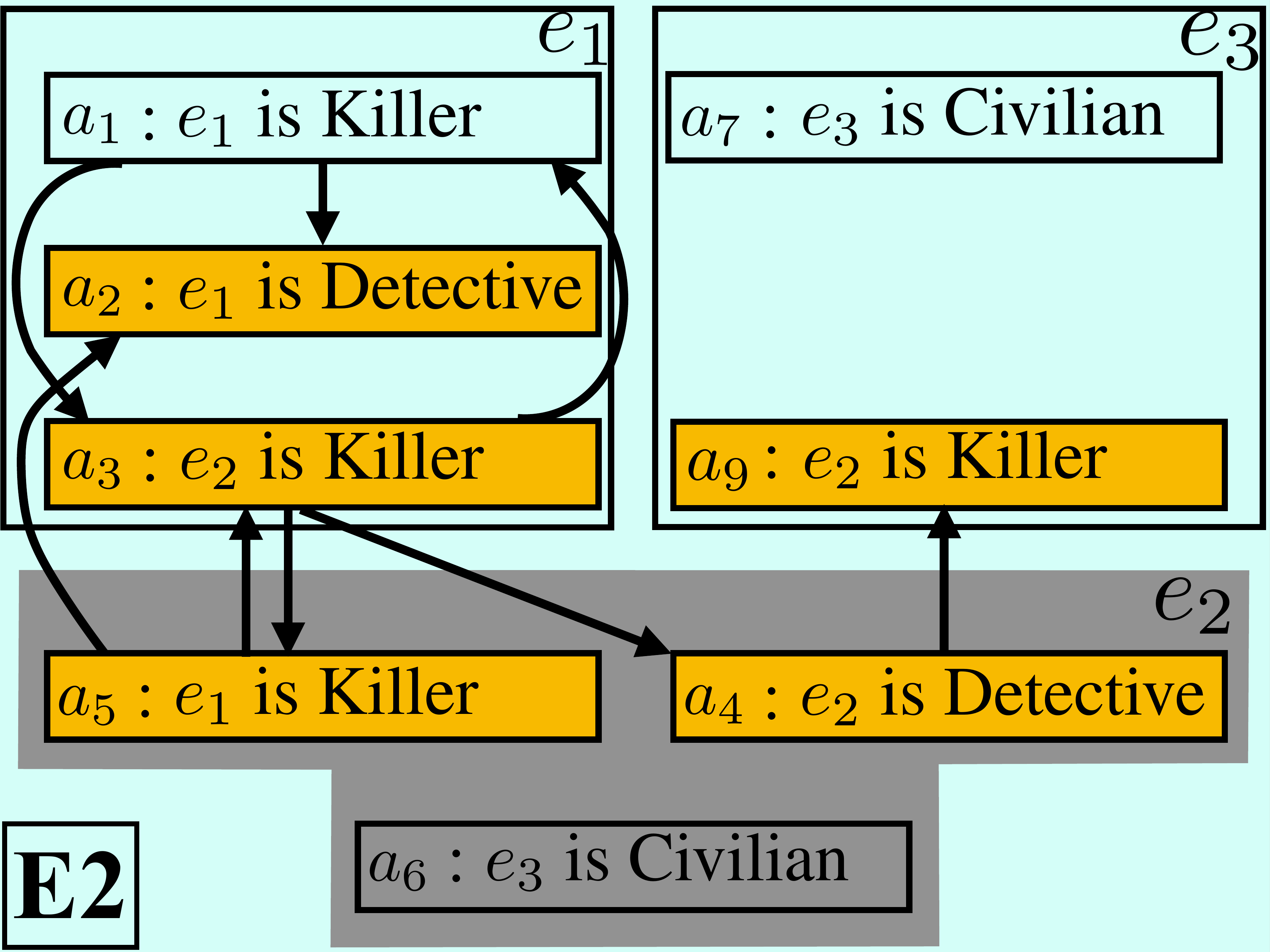}
     \includegraphics[scale=0.11]{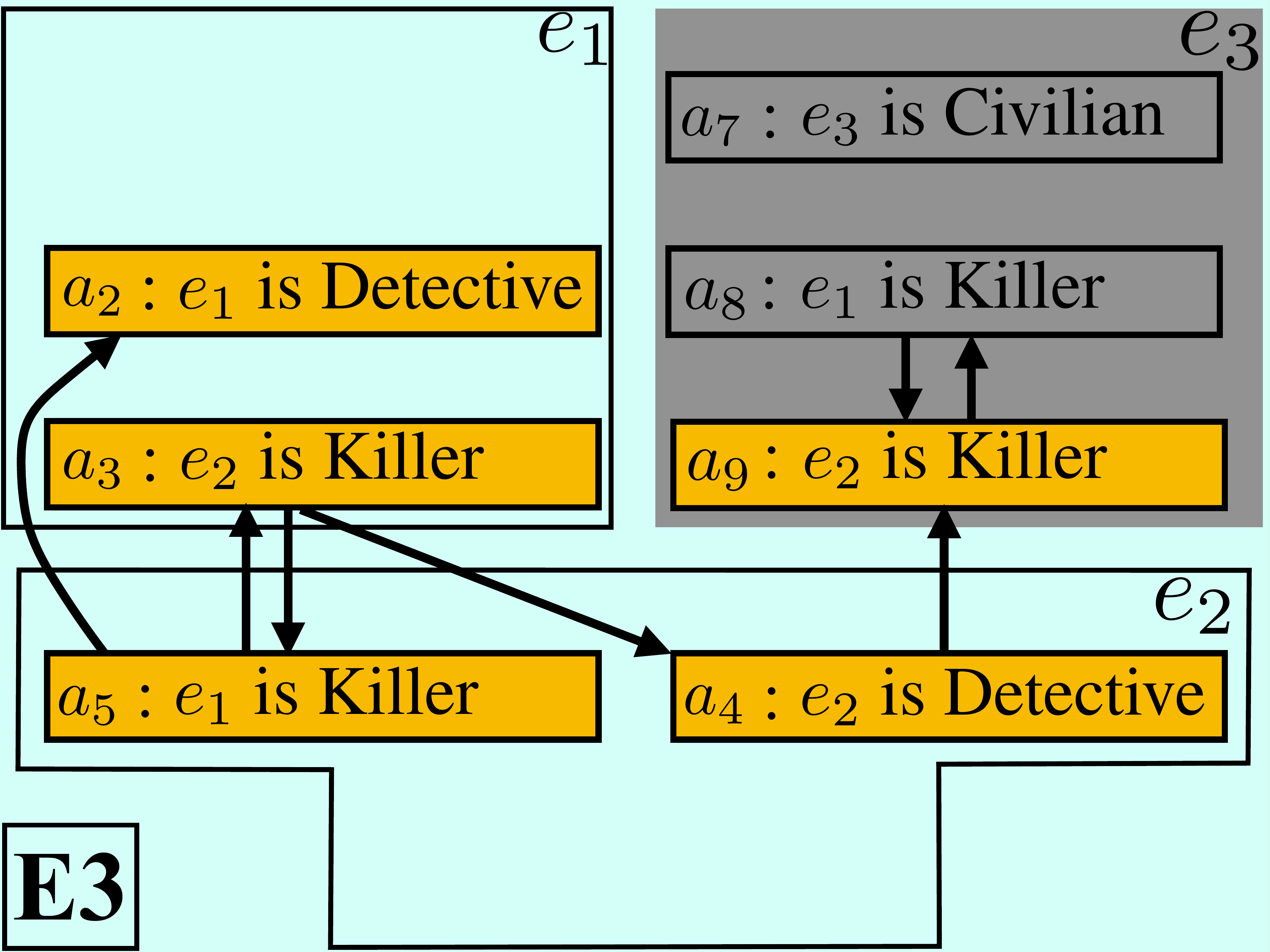} 
  \end{center}  
\subsection{Intra-agent preferences}\label{subsection_intra_agent_preferences}
   To talk of the role of intra-agent preferences, 
   suppose $e_1$ applies its own semantics $\semsmall$, 
   say $\pr$ (preferred semantics; see Section~2), to 
   the argumentation in {\small \fbox{E1}} to tell which arguments 
   are acceptable. By its definition (see Section~2), 
   $e_1$ considers either $\{a_1, a_4, a_5\}$ ($e_1$ is Killer
    and $e_2$ is Detective) 
   acceptable or else $\{a_2, a_3, a_9\}$ ($e_1$ is Detective 
    and $e_2$ is Killer) acceptable. 
   For 
    a rational judgement and not for a strategic purpose,
  however, 
    the second option is strange to say at the very least, 
    since it contradicts $e_1$'s factual knowledge $a_1$ ($e_1$
    is Killer). If we are to prioritise factual 
    arguments over the others, some attacks should 
    turn out to be spurious. 
    Similarly, $e_2$ who as Detective knows 
    $e_1$'s role should see  
    the attack of $a_3$ on $a_4$ as publicly announced by $e_1$ 
    is not factual: $a_4$ which $e_2$ knows factual 
    to it should refute $a_3$. 

    For fact-prioritised reasoning by an agent 
   of the argumentation it is aware of, we use an attack-reverse
     preference 
    per agent, 
    to prefer arguments that it knows factual (to 
    some agent) over the other arguments found 
    in its local agent argumentation. 
    {\small \fbox{E1}}, {\small \fbox{E2}} and {\small \fbox{E3}} with 
    preference-adjusted attack relations 
    are as shown in {\small \fbox{E1'}}, {\small \fbox{E2'}} and 
    {\small \fbox{E3'}}. 
     \begin{center} 
       \includegraphics[scale=0.11]{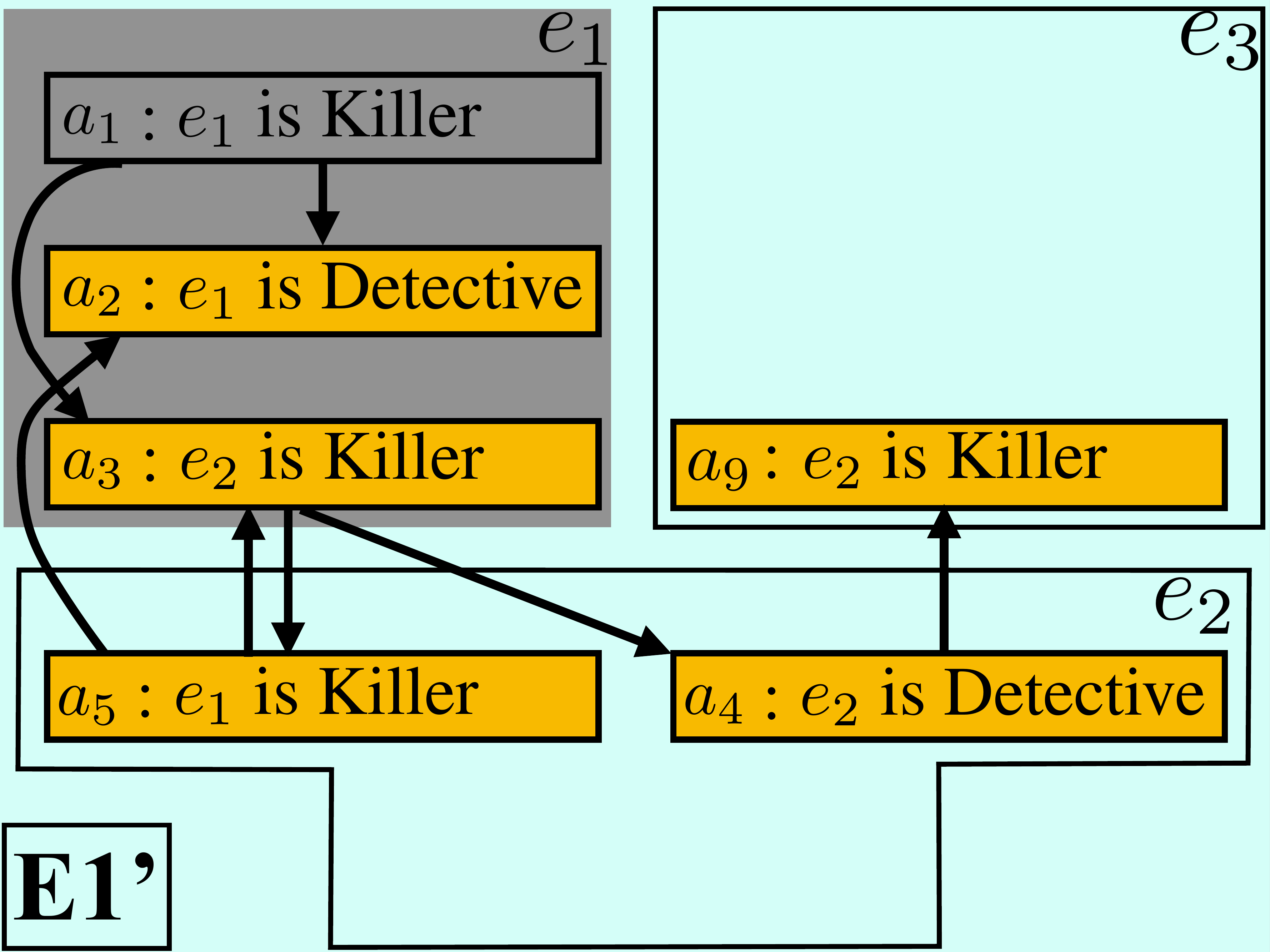}
       \includegraphics[scale=0.11]{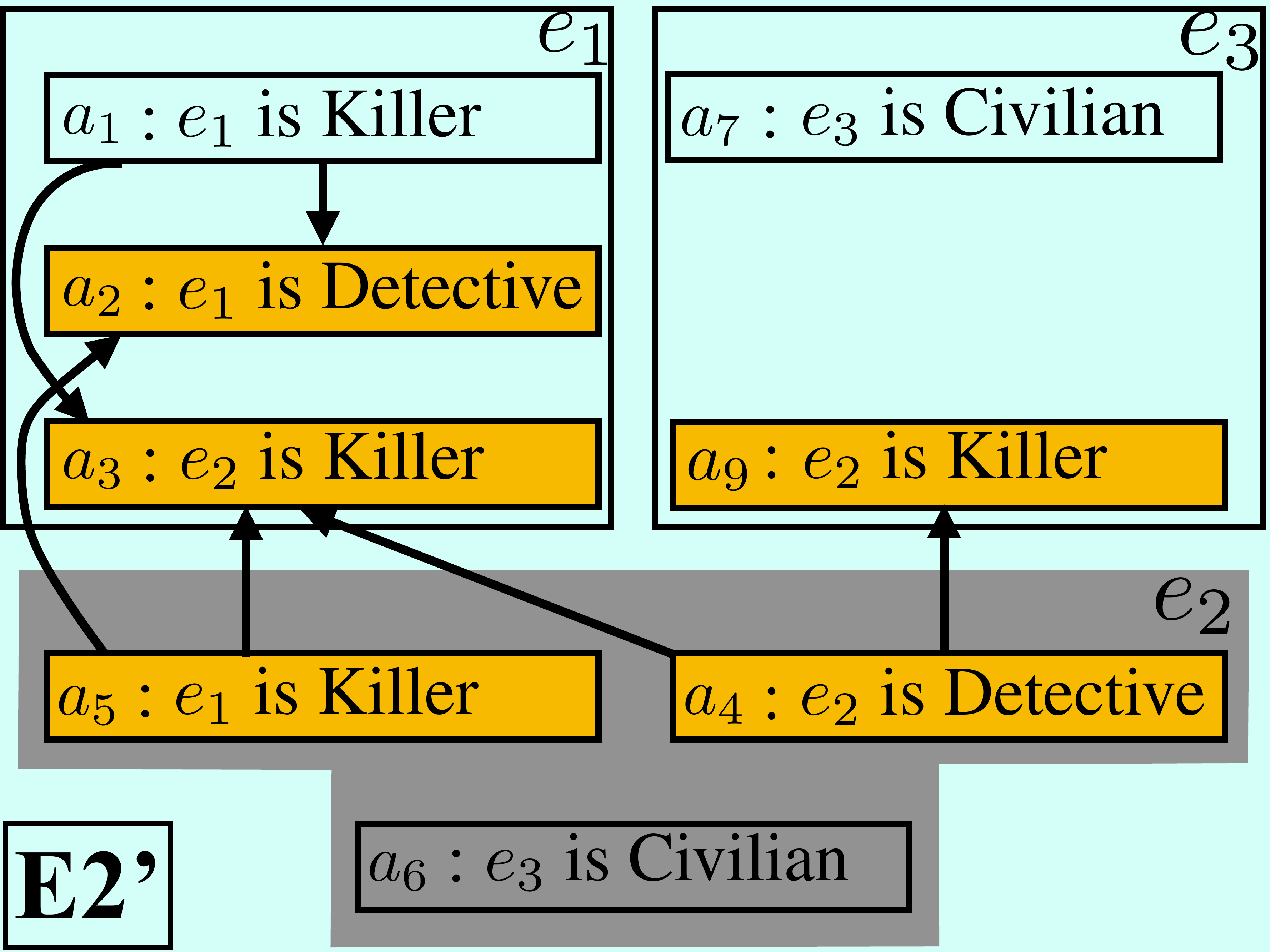}
       \includegraphics[scale=0.11]{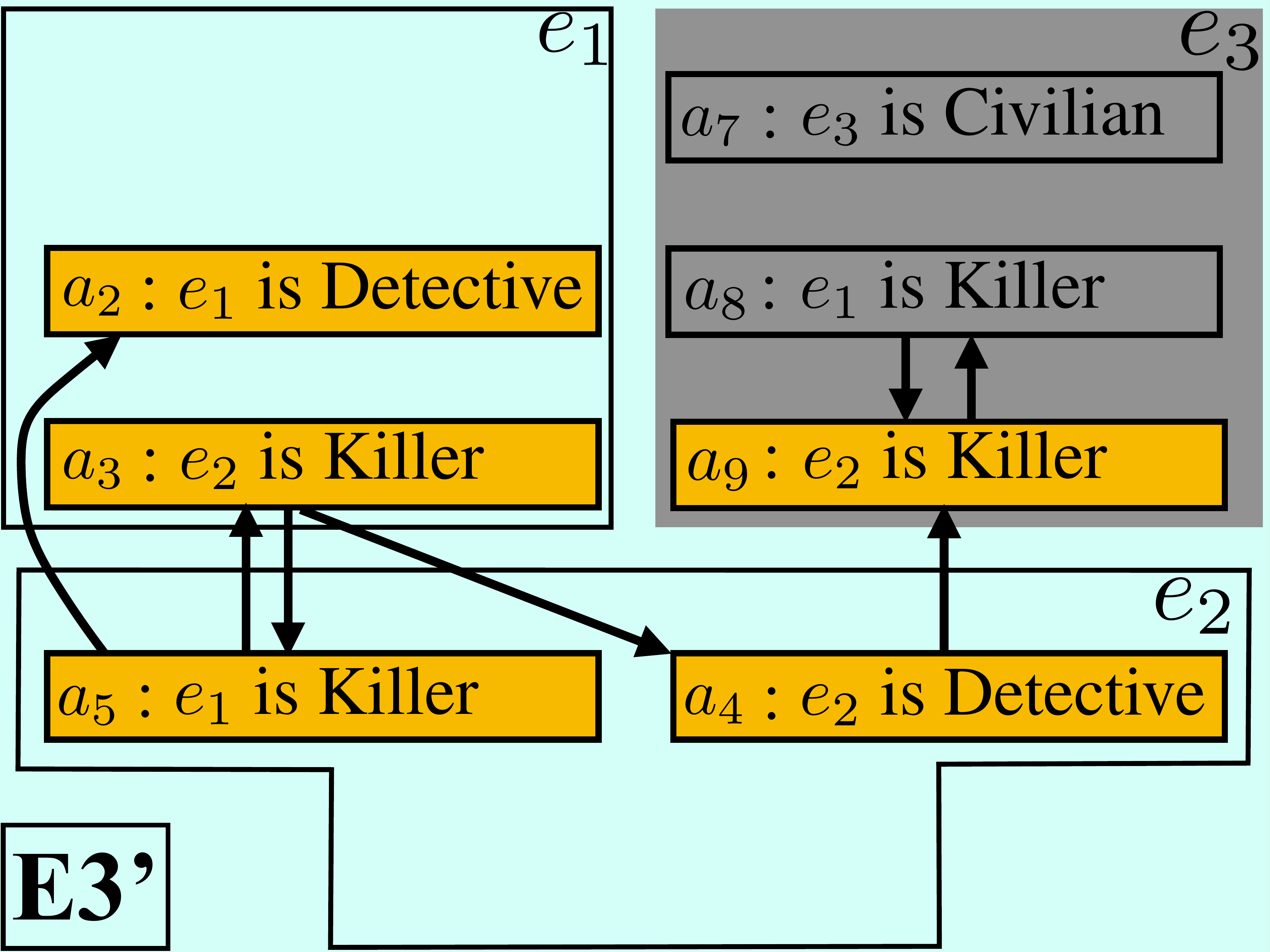}
   \end{center}   
Since both $e_1$ and $e_2$ know 
  that $e_1$ is Killer, i.e. $e_1$ knows $a_1$ to be factual to $e_1$,
  while  $e_2$ knows $a_1$ to be factual to $e_1$ and $a_5$ to be factual 
   to $e_2$,\footnote{$e_1$ cannot be certain $a_5$ is factual 
   to $e_2$, since, firstly, there may or may not be Detective 
   in a game, and, secondly, it could be Civilian who is bluffing 
   to be Detective.} 
   the attack from $a_3$ to $a_1$ is not in {\small \fbox{E1'}} or {\small \fbox{E2'}}. 
   Additionally, in {\small \fbox{E2'}}, the attack 
   from $a_3$ to $a_5$ is not present, and 
   the attack from $a_3$ to $a_4$ is reversed, 
   since $e_2$ knows $a_4$ and $a_5$ are factual to $e_2$.  
   By contrast, attacks in {\small \fbox{E3'}} 
   remain unchanged from {\small \fbox{E3}}, since 
   $e_3$ knows only that $a_7$ is factual to $e_3$.  
  \vspace{-0.2cm} 
  \subsubsection{On deception, and intra-agent preferences.}    
    A method of deception detection in two-party
   argumentation is found in 
   \cite{Sakama12}. In Section 5 of \cite{Sakama12} 
   that describes it, 
   an argument $a_x$ an agent $e_1$ puts forward 
   as an acceptable argument 
   is detected by an agent  
   $e_2$ to be deceptive if 
   $e_1$ has put forward an argument $a_y$ as acceptable  
   such that 
   {\small $a_x \underbrace{\rightarrow \cdots \rightarrow}_{2k+1} a_y$} 
   or {\small $a_y \underbrace{\rightarrow \cdots \rightarrow}_{2k+1} 
   a_x$} 
   for {\small $k \in \mathbb{N}$} (when there is 
   a graph path between $a_x$ and $a_y$ with an odd number of 
   edges), and that every argument in the path has been originally 
   put forward by $e_x$.

   In certain situations, 
   the proposed approach does not accurately model 
   deception detection, leading possibly to counter-intuitive results.  
   For example, consider $e_3$, Civilian, in our example. 
   As shown in {\small \fbox{B}}, $e_3$ chose to put forward 
   $a_9$ ($e_2$ is Killer) 
   as an acceptable argument. There, however, 
   was an alternative argument $a_8$ ($e_1$ is Killer) that could 
   have been put forward instead. These two arguments 
   are in mutual conflict, and only one of them may  
   be acceptable at one moment. But suppose, hearing the argumentation  
   by $e_2$ and $e_1$, that $e_3$ develops an impression 
   that $e_1$ is more likely the Killer, 
   since $a_4$ attacks $a_9$. Suppose $e_3$ then 
   changes its mind, and puts forward $a_8$ 
   as an acceptable argument, then $a_9$ becomes 
   non-acceptable. While, initially, $a_8$ was not considered 
   acceptable and $a_9$ acceptable (call it Scenario 1), and 
   later the acceptability statuses were swapped (call it Scenario 2),  
   the change was due to context change, 
   i.e. Scenario 1 seemed more likely to $e_3$ at the beginning of 
   the game, and Scenario 2 seemed more likely once the additional
   information was gained. For example once at {\small \fbox{C}}, $e_3$ could 
   have announced $a_8$ as acceptable, but that should not 
   lead to $e_3$'s deceptive intention in former announcement of 
   $a_9$. 
   The method in \cite{Sakama12} produces a false 
   positive in this kind of 
   a situation.  A false negative can also result. In our example, when 
   $e_1$ declares $e_2$ Killer (see {\small \fbox{D}}), 
   deceptive intention of $e_1$ should be already evident to $e_2$, 
   as it knows that ``$e_1$ is Killer'' is factual to $e_1$. However,  
   $e_1$ does not announce $a_1$ to obviously contradict itself  
   in public. But then the publicly known arguments   
   $a_2$ and $a_3$ do not attack each other, and 
   thus, according to the proposed approach, 
   $e_2$ will not detect $e_1$'s deception.   
   
    There is also an assumption on agents, 
   that they are attack-omniscient: if an agent learns 
   some arguments from another agent, it will 
   recover any attacks among them, whether or not  
   they were announced by that agent. In practice, it is not 
   necessary that 
   an agent is able to see an unannounced attack \cite{Takahashi16,Kakas05}; 
   however, more problematic to manipulable argumentation, 
  spurious attacks may be announced, 
   to complicate 
   deception detection in the absence of fact-prioritisation. 

   To see the point, suppose $a_x$ is an argument of $e_x$ that $e_x$ knows is 
   not acceptable in its local argumentation (by its $\semsmall$; 
   say $\pr$). 
   Suppose that $e_x$ nonetheless puts 
   $a_x$ forward. Suppose $e_x$ 
   needs later on to reveal $a_y$ as an acceptable 
   argument which $e_x$ knows attacks $a_x$. 
   Now, let us say that $e_y$ is $e_x$'s opponent. 
   If $e_x$ puts $a_y$ forward, $e_y$ will know 
   of $a_y$. Firstly, $e_y$ may not know $a_y$ attacks $a_x$ 
  \cite{Takahashi16,Kakas05}. However, even if $e_y$ sees the attack, 
   $e_x$ can safeguard against $e_y$'s reproach by 
   announcing a spurious attack from $a_x$ to $a_y$, 
   with which $e_x$ feigns context change (as we described earlier for 
   Civilian in our example) as an explanation for retracting 
   the previously announced acceptability status of $a_x$, i.e. 
   it concocts the following reasoning: (1) 
   $a_x$ and $a_y$ attack 
   each other; (2) only one of the two arguments is acceptable
   at one time; (3) but because both of 
   them may be acceptable, it was reasonable 
   that I (= $e_x$) previously 
   put $a_x$ forward as an acceptable argument; (4) 
   but now I am considering in 
   another context in which 
   $a_y$ instead is acceptable. \\
   \indent This way, 
   $e_x$ fakes the earlier described innocent belief change 
   as by Civilian.  
   With our example, 
   even if $e_1$ should announce 
   $a_1$ later, $a_1$ and $a_3$ are in mutual conflict 
   (and, even if only $a_1$ attacks $a_3$, the safeguarding 
   we have just described will produce the attack from $a_3$
    to $a_1$).
   Differentiation of the faking 
   from the innocent belief irresolution is not trivial if
   an agent cannot distinguish arguments in its local agent 
   argumentation. 
     \vspace{-0.3cm} 
   \subsubsection{Use of intra-agent preferences for deception/honesty detection.}   
   We address the difficulties above with the intra-agent preferences 
   to prioritise arguments that an agent knows are factual (to some agent); 
   see again {\small \fbox{E1'}} and {\small \fbox{E2'}}, where   
   $a_1$ attacks $a_3$ but not vice versa. For concrete steps to 
   detect deception/honesty, an agent should: 
      \begin{enumerate} 
         \item have the source argumentation (the one 
         with respect to which detection is conducted) 
  and the target argumentation (the one in which 
         deception/honesty may be detected). 
         \item calculate the semantics of the two argumentations. 
         \item restrict them to those arguments for which detection 
                is taking place. This restriction  
                is necessary since the two argumentations   
                may cover more arguments. It is also necessary
               to not restrict the two argumentations from a start 
               since the agent's rational judgement 
                as regards acceptability
             statuses of the concerned arguments 
                is based on them as a whole. 
         \item finally calculate the presence of     
        deception/honesty by applying an appropriate 
         criterion to compare the restricted semantics. 
      \end{enumerate}
   Let us first inspect deception detection 
   by considering 
   the transition from {\small \fbox{C}} to {\small \fbox{D}} 
  (re-listed below) induced by $e_1$'s public announcement. 
   \begin{center}  
        \includegraphics[scale=0.11]{WordsStep2justAttack.pdf}
        \includegraphics[scale=0.11]{WordsStep3justAttack.pdf}   
        \includegraphics[scale=0.11]{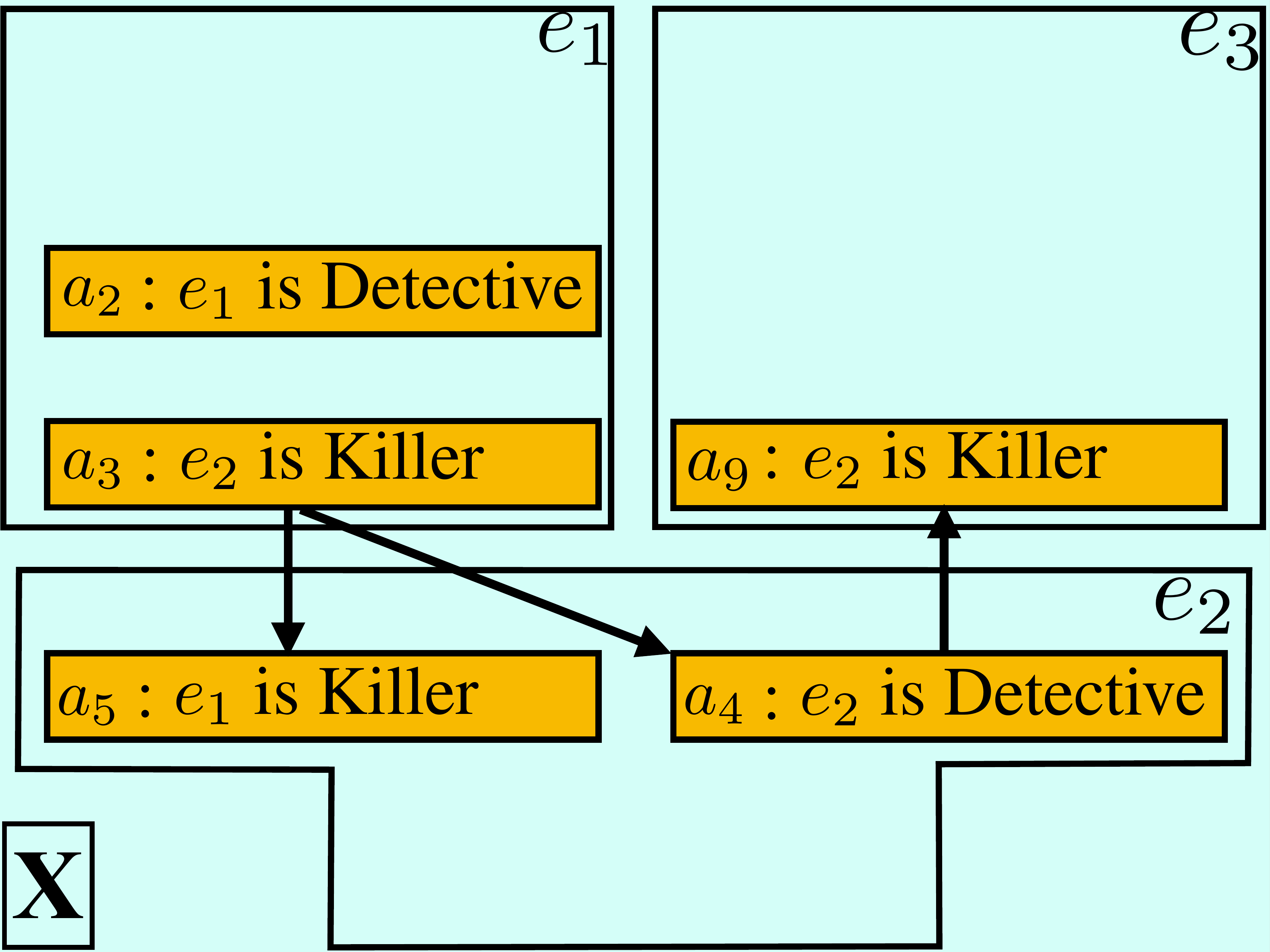} 
        \includegraphics[scale=0.11]{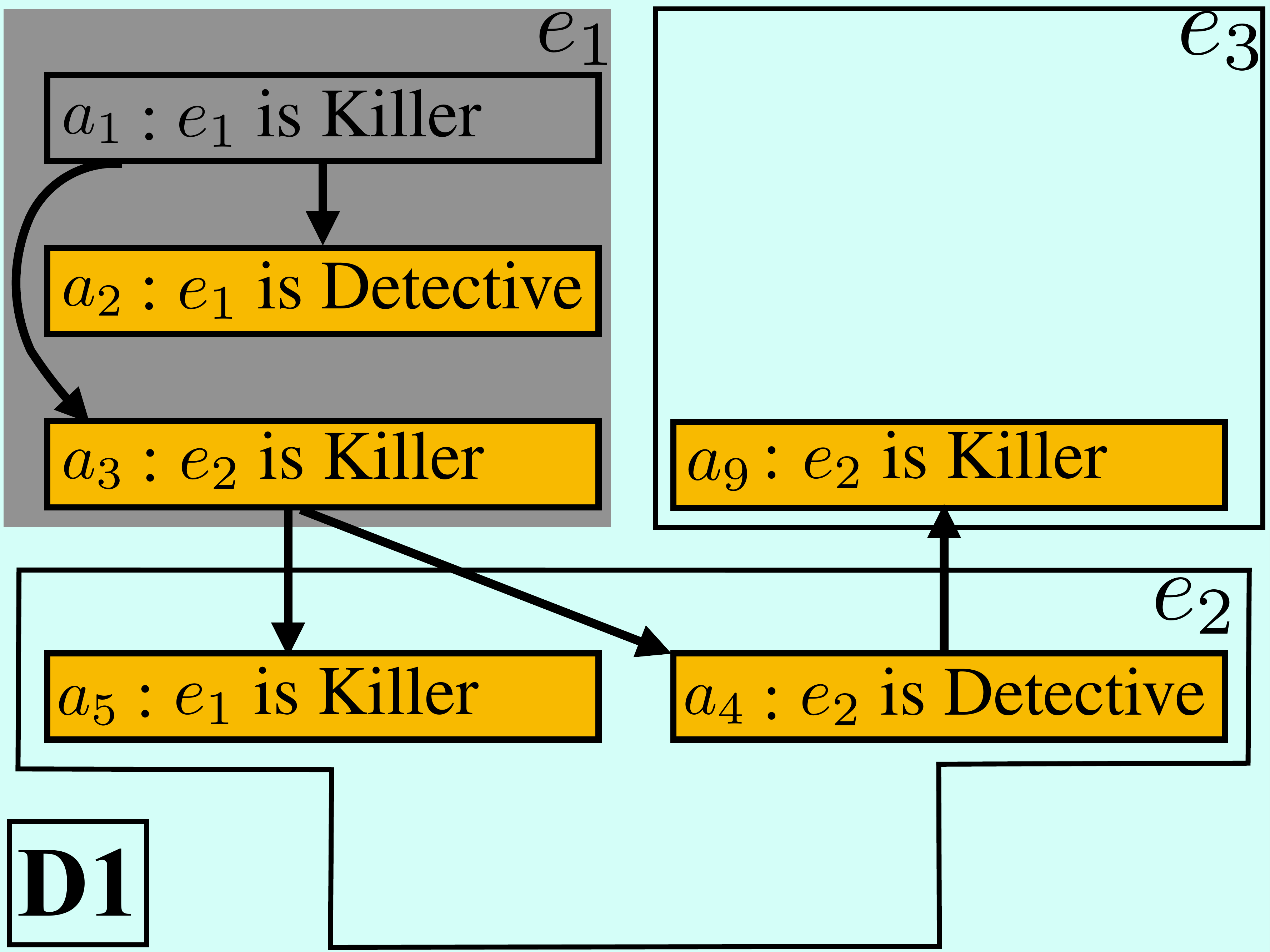}  
        \includegraphics[scale=0.11]{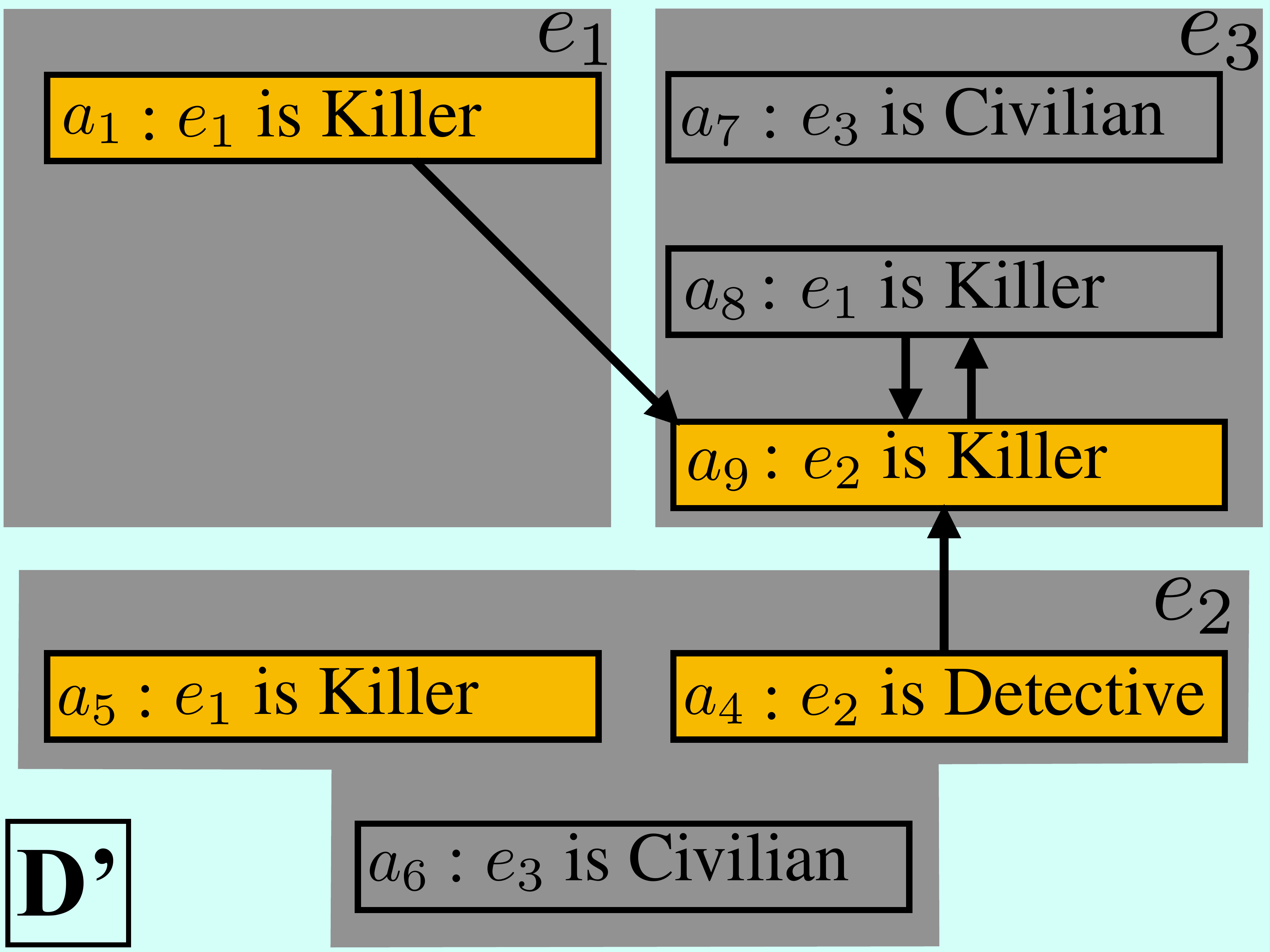}  
        \includegraphics[scale=0.11]{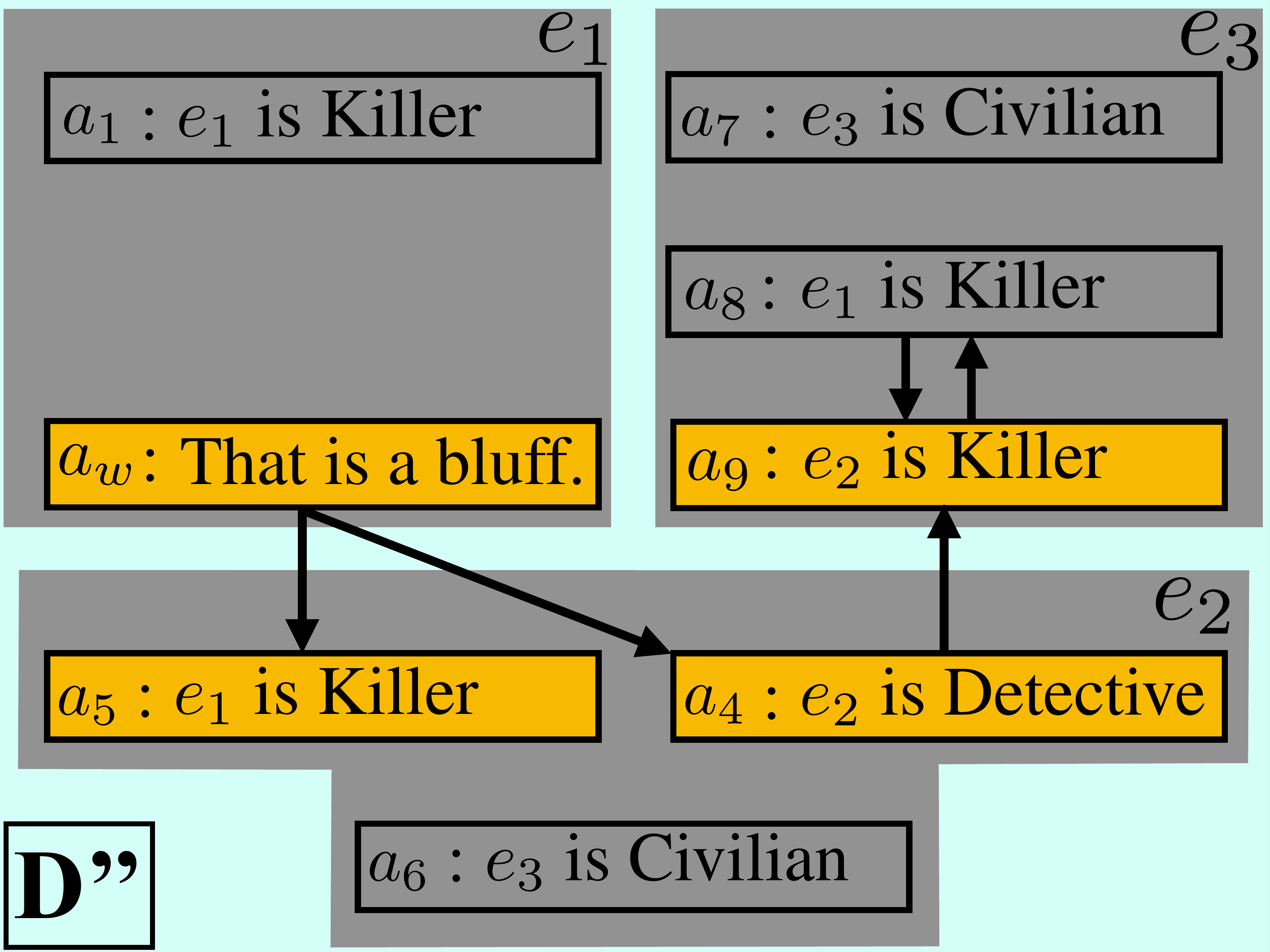} 
   \end{center} 
   Suppose it is $e_2$ that wants to check deception in the 
   new public announcement by $e_1$. \textbf{Step~1.} Since $e_2$ needs to 
   see any discrepancy between what $e_1$ has claimed 
   in public and what $e_2$ perceives $e_1$ actually thinks, 
   the source argumentation is the argumentation consisting only 
   of all the previous public announcements including $e_1$'s, 
   as shown in {\small \fbox{X}}, 
   while the target argumentation is $e_2$'s (opponent) model 
   of $e_1$'s local agent argumentation. For the 
   detection purpose, both must be already preference-adjusted 
   by what $e_2$ considers is $e_1$'s intra-agent preference, i.e.
   $e_2$'s model of $e_1$'s intra-agent preference. Here, 
   let us just assume that the source/target 
   argumentation is {\small \fbox{X}/\fbox{D1}}.\footnote{Recall
   $e_2$ knows $e_1$ knows $a_1$; as such, $a_1$ appears 
   in $e_2$'s model of $e_1$'s preference-adjusted 
    local agent argumentation. Recall also it is a common knowledge 
   that Killer does not know whether there be Detective; 
   as such, from $e_2$'s perspective, neither $a_4$ nor
    $a_5$ is known by $e_1$ to be factual to $e_2$.} 
    \textbf{Step 2.} The semantics of the source 
   argumentation is $\{\{a_2, a_3, a_9\}\}$, 
   and that for the target argumentation is $\{\{a_1,a_4,a_5\}\}$ 
   for a chosen $\semsmall \in \sem$.  
   \textbf{Step 3.} Note $e_2$ is checking the arguments in $e_1$'s 
   public announcement, which are $a_2$ and $a_3$. Hence, 
   the restriction of the semantics to them yield 
   $\{\{a_2, a_3\}\}$ (source) and $\{\emptyset\}$ (target). 
   \textbf{Step 4.} Recall that a semantics (= $\{A_1, \ldots, A_n\}$) 
   expresses non-deterministic
   possibilities, that each $A_i$ in the semantics 
   is judged possibly acceptable. 
   Thus, deception by $e_1$ is detected by $e_2$ certainly only when 
   the target semantics restricted to $a_2$ and $a_3$ (these 
   are what $e_2$ considers $e_1$ considers possibly acceptable) 
contains
   no member of the source semantics restricted to $a_2$ and $a_3$
   (these are what $e_2$ considers $e_1$ claims in public 
   to be possibly acceptable), 
   which holds good in this case because 
   $\{\emptyset\} \cap \{\{a_2, a_3\}\} = \emptyset$. 
    
   The differentiation of arguments 
   allows 
   us to also express detection of {\it honesty as
   truthfulness to arguments known to be factual}.  
   \footnote{This 
   differs from the honesty in \cite{Takahashi16}  
   to publicly announce only those arguments that the announcing 
   agent judges acceptable under the grounded semantics, according
   to which any agent who only announces its guesses that it accepts 
   becomes honest, which does not align well to our purpose.} 
  Suppose an alternative transition from {\small \fbox{C}} 
   with  $e_1$'s (rather silly) announcement 
   of $a_x:$ ``$e_1$ is Killer.'' into {\small \fbox{D'}}.  
   The first three steps of honesty detection 
   are the same as of deception detection.  
   Suppose $e_2$ is the detector, and suppose 
   $e_2$'s preference-adjusted model 
   of $e_1$ (the target argumentation) is 
   $(\{a_1, a_4, a_5, a_6, a_7, a_9\},\linebreak
   \{(a_1, a_9), (a_4, a_9)\})$, where $a_1$ is 
   considered factual to $e_1$. 
   The semantics of the source argumentation 
   is $\{\{a_1, a_4, a_5\}\}$, and that of 
   the target argumentation is 
   $\{\{a_1, a_4, a_5, a_6, a_7\}\}$. Since 
   $e_2$ is checking the argument in $e_1$'s public announcement, 
   they are restricted by $\{a_1\}$, yielding 
   $\{\{a_1\}\}$ (source) and $\{\{a_1\}\}$ (target).  
   \textbf{Step 4.} For detection of  
   honesty with respect to factual arguments, 
   each member $A_i$ of restricted source semantics
   $(\{A_1,\ldots, A_n\})$, 
   which $e_1$ has publicly claimed 
   acceptable, must consist only of the arguments 
   factual to $e_2$, since, if not, they can
   be just $e_1$'s guesses and bluffing to $e_2$. $A_i$ containing 
   any guesses is, insofar as 
   it is potentially deceptive, not certain honesty. 
   Moreover, the source 
   and the target semantics must exactly match; 
   in particular, the latter cannot contain strictly 
   greater a number of members than the source 
   argumentation\footnote{
   Since every public argumentation is known to every agent, 
   the converse is not possible.} which would imply 
   $e_1$'s withholding of factual information, 
   which again can be potentially a deceptive behaviour.  
   In this example ({\small \fbox{D'}}), $a_1$ is known to be factual to $e_2$, 
   and the two restricted semantics match exactly, so 
   $e_2$ detects $e_1$'s honesty. These two criteria 
   ensure that 
   $e_3$ does not detect $e_1$'s honesty at {\small \fbox{D'}}, 
   since $a_1$ is not known factual to $e_3$. 
\vspace{-0.3cm} 
  \subsection{Inter-agent preferences}\label{subsection_inter_agent_preferences}
\vspace{-0.1cm} 
      To talk of the role of inter-agent preferences, 
      let us say $e_3$ wants to decide which set(s) of arguments
      to publicly accept at \fbox{E} (to decide which agent
      should be hanged). $e_3$ then obtains 
      all the public argumentations announced up to \fbox{E} 
     (which is \fbox{X} plus two attacks from $a_5$ to $a_3$ 
    and from $a_5$ to $a_2$), as the basis of its reasoning. 
    It then adjusts it by its intra-agent preference, 
    to obtain its model of the public argumentation, 
    which in this particular example is again 
     \fbox{X} plus two attacks from $a_5$ to $a_3$ 
    and from $a_5$ to $a_2$, because 
    $e_3$ cannot tell whether any arguments by $e_1$ or $e_2$ 
    are factual. In the argumentation, 
      $e_3$ sees $a_3$ and $a_5$ in mutual
      conflict. With $\semsmall = \pr$, 
      $\{a_2, a_3, a_9\}$ ($e_2$ is Killer) and $\{a_4, a_5\}$ ($e_1$
     is Killer) 
       are two possible judgement. 
     
      Now, when $e_1$, $e_2$ and $e_3$ are 
      all strangers to each other, 
      it is likely that $e_3$ with $\semsmall = \pr$ will just have to 
      choose one of the two. 
      If, however, $e_3$ has gathered information  
      from previous interactions with them to the point where 
      $e_3$ considers 
      $e_2$ a liar and $e_1$ an honest agent, 
      then it is more likely that $e_3$ will trust
      $e_1$ more, to accept $\{a_2, a_3,a_9\}$ ($e_2$ 
      is Killer).   
      
      The trustworthiness of $e_1$ perceived 
      by $e_3$ can be expressed numerically.  
      Let $\mathbb{Z}$ be the class of all integers, 
      with a function $\ve: E \times E \rightarrow \mathbb{Z}$, 
      then  
      the numerical trust $e_3$ gives $e_1$ can be expressed 
      by $\ve(e_3, e_1)$. 
      Suppose also $\ve(e_3, e_2)$ such that    
      $\ve(e_3, e_2) < \ve(e_3, e_1)$.   
      By enforcing that a greater numerical value 
      implies a greater trust, we can express that $e_3$ trusts
      $e_1$ more, and can define an inter-agent preference 
      per agent to break mutually conflicting arguments in favour    
      of the agent(s) it trusts more.  
   \vspace{-0.4cm} 
\subsubsection{Deception and vagueness.}   
We assume that each agent updates trustworthiness of 
     those agents it interacts with when it detects 
     deception/honesty (see section 
     \ref{subsection_intra_agent_preferences}) 
     in the other agents. 
      As a result of this, 
      inter-agent preferences based on   
      the numerical trustworthiness 
      may dynamically update. 
      
      While not exactly in the scope of this paper, we briefly 
      describe how this may affect 
      agent's choice of which argumentation to publicly announce.  
      For a tactical advantage, 
      an agent may keep its public announcement vague, as often occurs 
      in politics\footnote{CMV: There should exist a system to ensure
      politicians admit to their blatant lies, at reddit.com.}, 
      instead of resorting to an obvious deception. As an example, 
      let us consider yet another transition
      from \fbox{C}, where $e_1$, instead of publicly announcing 
    a self-contradicting argumentation 
      with $a_2$ and $a_3$, which is easily detected by 
      Detective to be deceptive, opts for announcing 
      $a_w$, into {\small \fbox{D''}}, 
    to counter-attack $e_2$'s $a_4$ and $a_5$. Since 
      there is no possibility that Killer 
      would know of the presence of Detective, 
      it is always plausible to some degree that 
      $e_2$ is bluffing. Consequently, 
      $a_w$ cannot be contradicting arguments known 
      to $e_1$ to be factual in the local agent
      argumentation of any agent. With $a_w$, $e_1$ 
      maintains its perceived trustworthiness, which can be 
      a better tactic if this game is to be repeated multiple times. 
      In an extended work, we include experimental 
    results on this. 
\section{Manipulable Multi-Agent Argumentation 
     with Intra-/Inter-Agent Preferences}\label{section_manipulable_argumentation} 
\vspace{-0.2cm}     
We now formalise the intuition given in Section~\ref{section_motivation}. 
For the formalisation purpose, we consider the static part not involving 
public announcements a manipulable multi-agent argumentation, to 
which a public announcement provides dynamics.   
\begin{definition}[\MMA]\label{def_manipulable_multiagent_argumentation} 
{\small 
            Let $(\Fg, \Fpub, E, \fe, \fa, \gsem, \fleqintra, 
             \ve, \fleqinter)$ be a tuple with:  
    $\Fg \in \mathcal{F}^{\Dung}$; $\Fpub \in 2^{\Fg}$; 
    $E \subseteq \mathcal{E}$; 
 $\fe, \fa: E \rightarrow (2^{\Fg} \backslash (\emptyset, \emptyset))$; 
   $\gsem: E \times E \rightarrow \sem$; 
    $\fleqintra: 
     E \times E \rightarrow 2^{(A \cup \{\bot, \top\}) \times (A \cup \{\bot, \top\})}$;  
   $\ve: E \times E \rightarrow \mathbb{Z}$; and 
   $\fleqinter: E \rightarrow 2^{A \times A}$. 
    It is assumed that $\{\top, \bot\} \cap A = \emptyset$. 
    Such a tuple will be called a manipulable multi-agent 
   argumentation
     when it satisfies all the following: 
        \begin{enumerate}[leftmargin=0.3cm]  
            \item $\getR(\fe(e)) = \getR(\Fg) \cap (\getArg(\fe(e)) 
     \times \getArg(\fe(e)))$ for $e \in E$ $\andC$  \linebreak
       $\getArg(\fe(e_1)) \cap \getArg(\fe(e_2)) = \emptyset$ if 
      $e_1 \not= e_2$, $e_1, e_2 \in E$ 
 (\textbf{local scopes}).
   \begin{adjustwidth}{0.2cm}{} 
     {\normalfont\small Explanations: Global 
       argumentation, $\Fg$, is assumed to consist of 
       argumentations in each agent's local scope and
       attacks across any two of them. 
       As such, $\Fg$ restricted to $\getArg(\fe(e))$ 
       includes the same attacks 
       as in $\fe(e)$. No two local scopes overlap (see Section 
      \ref{subsection_agent_argumentation}).} 
   \end{adjustwidth} 
   \vspace{0.14cm} 
            \item $\fe(e) \in 2^{\fa(e)}$ for $e \in E$ 
                (\textbf{local agent argumentation}).  
      \begin{adjustwidth}{0.2cm}{} 
      {\normalfont\small Explanations: $e$'s local agent argumentation
    subsumes $e$'s local scope argumentation (Section~\ref{subsection_agent_argumentation}).}  
      \end{adjustwidth} 
    \vspace{0.14cm} 
           \item $\Fpub \in 2^{\fa(e)}$ for $e \in E$ 
     (\textbf{public subsumption}). 
    \begin{adjustwidth}{0.2cm}{}
    {\normalfont\small Explanations: 
     Every agent is aware of $\Fpub$ comprising the argumentation(s) 
   announced publicly. } 
   \end{adjustwidth} 
     \vspace{0.14cm} 
            \item $\fleqintra(e_1, e_2)$ for $e_1, e_2 \in E$ 
     is a partial order 
        on $\getArg(\fa(e_1)) \cup \{\bot, \top\}$ 
     (\textbf{partial order 1}).  
    \begin{adjustwidth}{0.2cm}{}
  {\normalfont\small Explanations: 
      Each $e_1 \in E$ considers intra-agent preference over 
   the members of $\getArg(\fa(e_1))$, i.e. those arguments $e_1$ 
    is aware 
  of.} 
   \end{adjustwidth} 
    \vspace{0.14cm} 
     \item $(\bot, \top) \in \fleqintra(e_1, e_2)$; $(\top, \bot) 
      \notin \fleqintra(e_1, e_2)$; 
             $(\bot, a) \in \fleqintra(e_1, e_2)$; and 
               $(a, \top) \in \fleqintra(e_1, e_2)$ for 
      $e_1, e_2 \in E$ and $a \in \getArg(\fa(e_1))$  
      (\textbf{top
     and bottom}). 
   \begin{adjustwidth}{0.2cm}{} 
{\normalfont\small Explanations:    
    For each intra-agent preference in $e_1$'s perspective, 
   any $a \in \getArg(\fa(e_1))$ is 
   at least as preferred as $\bot$, i.e. $\bot$ is the least 
   preferred in the set $(\getArg(\fa(e_1)) \cup \{\bot, \top\})$, 
   and, dually, 
     $a \in \getArg(\fa(e_1))$ is at most as preferred as 
   $\top$
     in  $(\getArg(\fa(e_1)) \cup \{\bot, \top\})$.}   
   \end{adjustwidth} 
   \vspace{0.14cm} 
            \item Either $(\top, a) \in \fleqintra(e_1, e_2)$,   
      or else $(a, \bot) \in \fleqintra(e_1, e_2)$ for 
        $e_1, e_2 \in E$ and $a \in \getArg(\fa(e_1))$ 
         (\textbf{binary}).  
   \begin{adjustwidth}{0.2cm}{} 
{\normalfont\small Explanations: 
      Every $a \in \getArg(\fa(e_1))$ is as preferable either 
   as $\bot$ or else as $\top$.} 
   \end{adjustwidth} 
     \vspace{0.14cm} 
            \item For  $e_1, e_2 \in E$ and 
           $a \in \getArg(\fe(e_2))$, 
       if $(\top, a) \in \fleqintra(e_1, e_1)$, then it holds that 
        $(\top, a) \in (\fleqintra(e_2, e_2) \cap  
         \fleqintra(e_1, e_2))$ 
   (\textbf{knowledge}). 
 \begin{adjustwidth}{0.2cm}{} 
 {\normalfont\small Explanations: If $e_1$ knows an argument $a_x$ in the scope 
   of another agent $e_2$ is factual to $e_2$, 
   then (1) $e_2$ knows $a_x$ is factual to $e_2$, 
   and (2) $e_1$ knows $e_2$ knows $a_x$ is factual to $e_2$.} 
  \end{adjustwidth} 
    \vspace{0.14cm} 
    \item $\fleqinter(e)$ for $e \in E$ is a partial order on $A$
    such that for $a_1, a_2 \in A$ and $e_1, e_2 \in E$:  
      $a_1 \in (\fe(e_1) \cap \fa(e))$ $\andC$ $a_2 
     \in (\fe(e_2) \cap \fa(e))$ $\andC$  
      $(a_1, a_2), (a_2, a_1) \in \Fpub$ $\andC$ 
      $(\top, a_1), (\top, a_2) \not\in \fleqintra(e, e)$ $\andC$ 
      $\ve(e, e_1) \leq \ve(e, e_2)$ iff 
       $(a_1, a_2) \in \fleqinter(e)$ 
    (\textbf{partial order 2}) 
  \begin{adjustwidth}{0.2cm}{}
  {\normalfont\small Explanations:  
    When $e$ wants to decide which
    set(s) of arguments in $\Fpub$ to publicly accept,
   and when there are mutually conflicting arguments 
   between two arguments, one in the scope of an agent, 
   and one in the scope of another agent, then      
   $e$ prefers the argument of the agent 
   it gives a greater trust to (via $\ve$), with an exception 
   that no factual arguments will become less preferable.} 
   \end{adjustwidth} 
     \end{enumerate} 
  We denote the class of all manipulable multi-agent argumentations
    by $\mathcal{F}^{MA}$, 
   and refer to each member of the class by $F^{MA}$
   with or without a subscript.   
}
    \end{definition}     
   Attacks in $\fa(e)$ 
    ($e$'s local argumentation) match exactly 
    those in $\Fg$ (global argumentation) for the arguments in 
     $\fe(e)$, i.e. local scopes 
     are faithfully reflected on $\Fg$, which signifies  
   that each agent is fully concious of its own local scope 
    argumentation. Proofs are in Appendix.
   \begin{proposition} 
         $(\getR(\fa(e)) \cup \getR(\Fg)) 
\cap (\getArg(\fe(e)) \times     
   \getArg(\fe(e))) =  \getR(\fe(e))$.  
   \end{proposition}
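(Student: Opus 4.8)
The plan is to unfold the definitions and chase set memberships. Write $(A, R) \equiv \Fg$ and note that, by the definition of $2^{\Fg}$, we have $\getArg(\fe(e)) \subseteq A$ and $\getR(\fe(e)) \subseteq R \cap (\getArg(\fe(e)) \times \getArg(\fe(e)))$; the same clause applied to $\fa(e) \in 2^{\Fg}$ gives $\getR(\fa(e)) \subseteq R \cap (\getArg(\fa(e)) \times \getArg(\fa(e)))$. The two facts I will lean on are: condition~1 of Definition~\ref{def_manipulable_multiagent_argumentation} (local scopes), which says $\getR(\fe(e)) = \getR(\Fg) \cap (\getArg(\fe(e)) \times \getArg(\fe(e)))$; and condition~2 together with the second clause of ``agent argumentation with epistemic functions'' in Section~\ref{subsection_agent_argumentation}, which says $\getR(\fa(e)) \cap (\getArg(\fe(e)) \times \getArg(\fe(e))) = \getR(\fe(e))$.

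First I would prove the $\supseteq$ inclusion: by condition~1, $\getR(\fe(e)) = \getR(\Fg) \cap (\getArg(\fe(e)) \times \getArg(\fe(e)))$, which is clearly contained in $(\getR(\fa(e)) \cup \getR(\Fg)) \cap (\getArg(\fe(e)) \times \getArg(\fe(e)))$ since adding $\getR(\fa(e))$ to the union only enlarges it. For the $\subseteq$ inclusion, take any $(a_1, a_2)$ in the left-hand side; then $a_1, a_2 \in \getArg(\fe(e))$, and $(a_1, a_2) \in \getR(\fa(e)) \cup \getR(\Fg)$. Split into two cases. If $(a_1, a_2) \in \getR(\Fg)$, then $(a_1, a_2) \in \getR(\Fg) \cap (\getArg(\fe(e)) \times \getArg(\fe(e))) = \getR(\fe(e))$ by condition~1, done. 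If instead $(a_1, a_2) \in \getR(\fa(e))$, then since also $(a_1, a_2) \in \getArg(\fe(e)) \times \getArg(\fe(e))$ we get $(a_1, a_2) \in \getR(\fa(e)) \cap (\getArg(\fe(e)) \times \getArg(\fe(e)))$, which equals $\getR(\fe(e))$ by the defining property of $\fa$ recalled above; done.

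This is essentially a two-line argument once the right ambient sets are named, so I do not expect a genuine obstacle. The only mild subtlety is bookkeeping: making sure the intersection with $\getArg(\fe(e)) \times \getArg(\fe(e))$ is carried through both branches of the case split (in the $\getR(\fa(e))$ branch it is what lets us invoke the $\fa$-matching property, and in the $\getR(\fg)$ branch it is what lets us invoke the local-scopes condition). Neither condition alone suffices — the proof genuinely uses both that $\fe$ reflects $\Fg$ faithfully and that $\fa$ agrees with $\fe$ on $\fe$'s scope — so the statement is really the observation that these two agreements combine to pin down $\getR(\fe(e))$ exactly, regardless of whether one reads the attacks off the local or the global argumentation.
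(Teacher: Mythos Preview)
Your proof is correct, but you work harder than necessary. You even record the key fact in your opening paragraph --- that $\fa(e) \in 2^{\Fg}$ gives $\getR(\fa(e)) \subseteq \getR(\Fg)$ --- and then do not exploit it. The paper's proof uses exactly this observation to collapse the union: since $\getR(\fa(e)) \subseteq \getR(\Fg)$, we have $\getR(\fa(e)) \cup \getR(\Fg) = \getR(\Fg)$, and the left-hand side becomes $\getR(\Fg) \cap (\getArg(\fe(e)) \times \getArg(\fe(e)))$, which is $\getR(\fe(e))$ by the \textbf{local scopes} condition alone. No case split, no appeal to the $\fa$-matching property, done in one line.

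Your route instead splits on which disjunct of the union a pair belongs to, and in the $\getR(\fa(e))$ branch invokes the matching property $\getR(\fa(e)) \cap (\getArg(\fe(e)) \times \getArg(\fe(e))) = \getR(\fe(e))$ from Section~\ref{subsection_agent_argumentation}. That property is not explicitly listed among the conditions of Definition~\ref{def_manipulable_multiagent_argumentation} (condition~2 there only asserts $\fe(e) \in 2^{\fa(e)}$), so you are either importing it tacitly or would need to derive it --- and deriving it would again go through $\fa(e) \in 2^{\Fg}$ and \textbf{local scopes}, making the detour circular. Your argument is sound, but the second branch of the case split is subsumed by the first once you remember $\getR(\fa(e)) \subseteq \getR(\Fg)$; the paper's proof is what remains after that redundancy is removed.
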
    
  \noindent Also, there indeed exist manipulable 
    multi-agent argumentations: 
  \begin{theorem}[Existence] 
     $\mathcal{F}^{MA} \not= \emptyset$. 
  \end{theorem}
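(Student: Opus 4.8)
The plan is to establish $\mathcal{F}^{MA} \neq \emptyset$ by exhibiting a single concrete witness — the smallest sensible one — and checking the eight defining conditions in turn. The natural candidate is a one-agent, one-argument instance: take $E = \{e\}$, let $\mathcal{A}$ contain a single argument $a$ (distinct from $\top, \bot$, which is allowed since we may choose $\mathcal{A}$ freely), set $\Fg = (\{a\}, \emptyset)$, $\Fpub = (\{a\}, \emptyset)$, and $\fe(e) = \fa(e) = (\{a\}, \emptyset)$. For the semantics-selecting functions, pick $\gsem(e,e)$ to be any fixed element of $\sem$, e.g. $\gr$, and pick $\ve(e,e)$ to be any integer, e.g. $0$. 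The only slightly delicate choices are the two preference relations, and there the design is dictated by conditions 5–7: define $\fleqintra(e,e)$ to be the reflexive closure of $\{(\bot,\top),(\bot,a),(a,\top)\}$ — i.e. the total order $\bot < a < \top$ on $\{\bot, a, \top\}$ — and define $\fleqinter(e)$ to be the trivial partial order $\{(a,a)\}$ on $A = \{a\}$.

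Next I would verify the conditions one by one. Condition 1 (local scopes): $\getR(\fe(e)) = \emptyset = \getR(\Fg) \cap (\{a\}\times\{a\})$ since $\Fg$ has no attacks, and the disjointness clause is vacuous with a single agent. Condition 2: $\fe(e) = \fa(e) \in 2^{\fa(e)}$ trivially. Condition 3: $\Fpub = \fa(e) \in 2^{\fa(e)}$. Condition 4: $\fleqintra(e,e)$ as defined is a partial order (indeed a total order) on $\{a,\bot,\top\}$ — reflexive by construction, antisymmetric and transitive because it is a chain. Condition 5 (top and bottom): $(\bot,\top)$ is in; $(\top,\bot)$ is not, since the chain is strict; $(\bot,a)$ and $(a,\top)$ are in. Condition 6 (binary): for the unique $a$, we have $(a,\top) \in \fleqintra(e,e)$, so the first disjunct holds. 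Condition 7 (knowledge): the hypothesis requires $(\top,a) \in \fleqintra(e_1,e_1)$; but $(\top,a) \notin \fleqintra(e,e)$ in our witness, so the implication is vacuously satisfied for the only available instantiation $e_1 = e_2 = e$. Condition 8 (partial order 2): $\fleqinter(e) = \{(a,a)\}$ is a partial order on $\{a\}$; for the biconditional, the left side requires among other conjuncts that $(a_1,a_2),(a_2,a_1) \in \Fpub$, but $\Fpub$ has no attacks, so the conjunction on the left is false for every choice of $a_1,a_2,e_1,e_2$, and the right side $(a_1,a_2)\in\fleqinter(e)$ holds only for $a_1=a_2=a$; hence we must also check that when $a_1 = a_2 = a$ the left side is false — which it is, since $(a,a)\notin\Fpub$ — so both sides are false and the biconditional holds throughout.

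I expect the main obstacle — such as it is — to be bookkeeping around conditions 5–8 rather than anything conceptual: one has to be careful that the mandated presence of $(\bot,\top)$, $(\bot,a)$, $(a,\top)$ in $\fleqintra$ together with the mandated \emph{absence} of $(\top,\bot)$ is simultaneously consistent with the ``binary'' dichotomy of condition 6 and does not accidentally force $(\top,a)$ in (which would then trigger condition 7's consequent). Choosing the intra-agent preference so that $a$ sits strictly between $\bot$ and $\top$ resolves all of this cleanly. The only other point needing a word is that $\{\top,\bot\}\cap A = \emptyset$ is a standing assumption of the definition and is satisfied because we are free to pick the underlying argument from $\mathcal{A}$ disjoint from the two reserved symbols. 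With all eight conditions verified, the exhibited tuple lies in $\mathcal{F}^{MA}$, so the class is nonempty. $\qed$
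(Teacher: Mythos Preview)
Your construction has a genuine error in the verification of condition~6 (\textbf{binary}). That condition requires, for every argument $a$, that either $(\top,a)\in\fleqintra(e_1,e_2)$ or $(a,\bot)\in\fleqintra(e_1,e_2)$; in words, $a$ must be equivalent to $\top$ or equivalent to $\bot$. Your chain $\bot < a < \top$ contains $(a,\top)$ and $(\bot,a)$ but \emph{neither} $(\top,a)$ nor $(a,\bot)$, so the dichotomy fails. When you write ``we have $(a,\top)\in\fleqintra(e,e)$, so the first disjunct holds'' you have transposed the pair: the first disjunct is $(\top,a)$, not $(a,\top)$. The repair is easy --- e.g.\ add $(a,\bot)$ so that $a$ sits at the bottom level --- and then condition~7 remains vacuous as you intended.

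There is a second slip in your treatment of condition~8. You set $\fleqinter(e)=\{(a,a)\}$ and then argue that for $a_1=a_2=a$ ``both sides are false''; but the right-hand side $(a_1,a_2)\in\fleqinter(e)$ is \emph{true} for $(a,a)$, while the left-hand conjunction is false (since $(a,a)\notin\Rpub$), so the biconditional fails at that instance. To salvage the witness you would need $\fleqinter(e)$ to be empty on the non-reflexive part, and whether the reflexive pair is forced depends on whether the paper reads ``partial order'' as reflexive here; the biconditional as literally written is in tension with reflexivity whenever $\Rpub$ has no self-attacks, so some care (or an appeal to the paper's intended reading) is needed.

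For comparison, the paper's own proof does not build a minimal witness at all: it simply points to Example~1, the three-agent Mafia instance worked out in the text, as a member of $\mathcal{F}^{MA}$. Your route is more self-contained in spirit, but as it stands the verification does not go through.
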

   Many of the components of a manipulable multi-agent argumentation
   are as defined in Section~2. 
   For $\Fpub \equiv (\Apub, \Rpub)$, 
    it is the `public' argumentation,
   comprising all the publicly announced argumentation(s). For $\gsem$, it generalises $\fsem$ by allowing 
   what an agent sees is another agent's semantics, e.g. 
    $\gsem(e_1, e_2)$ is $e_1$'s model of $e_2$'s semantics,  
    which may not be the same as $e_2$'s semantics, i.e. 
    possibly $\gsem(e_1, e_2) \not= \gsem(e_2, e_2)$. 
    For $\fleqintra$, it is the function for 
    intra-agent preferences. $\fleqintra(e, e)$ 
   for $e \in E$ is the intra-agent preference 
   that $e$ applies to $e$'s local agent argumentation. In general,   
    $\fleqintra(e_1, e_2)$ for $e_1, e_2 \in E$ 
    is $e_1$'s model of $e_2$'s intra-agent preference that $e_1$ applies 
   to $e_1$'s model of $e_2$'s local agent argumentation,
   whose formal definition is as follows:  

  \begin{definition}[Perceived argumentation]   
      Let $\om: \mathcal{F}^{MA} \times \mathcal{E}  \times \mathcal{E}
      \rightarrow 
           \mathcal{F}^{\Dung}$ be such that, 
     for any  $F^{MA} \equiv (\Fg, \Fpub,  E, \fe, 
        \fa, \gsem, \fleqintra, \ve, \fleqinter)$ and any $e_1, e_2 \in E$, 
      $\om(F^{MA}, e_1, e_2)$ 
  satisfies: $\Fpub \cup (\fa(e_1) \cap \fe(e_2)) \subseteq 
         \om(F^{MA}, e_1, e_2) \subseteq 
         \fa(e_1)$ (\textbf{epistemic bounds}). 
 We say that $\om(F^{MA}, e_1, e_2)$ is $e_1$'s 
    model of $e_2$'s argumentation, which 
    we also denote $e_1\smile e_2$.
    We assume $e_1 \smile e_2$ is exactly
     $\fa(e_1)$ if $e_1 = e_2$. 
  \end{definition}         
 By (\textbf{epistemic bounds}), $e_1 \smile e_2$ 
  is contained in the argumentation $e_1$ is aware of, 
  and contains the public argumentation as well as 
  arguments in $e_2$'s local scope $e_1$ is aware of.  
 Intra-agent-preference-adjusted $e_1 \smile e_2$ 
   results from applying $\fleqintra(e_1, e_2)$ to
   $e_1 \smile e_2$, which we use  
for deception/honesty detection (see 
   \ref{subsection_intra_agent_preferences}): 
  \begin{definition}[$\fleqintra$-adjusted $e_1 \smile e_2$]{\ }\\
   Let $\adjust: \mathcal{F}^{\Dung} \times  
        2^{(\mathcal{A} \cup \{\top, \bot\}) \times 
         (\mathcal{A} \cup \{\top, \bot\})} \rightarrow  
      \mathcal{F}^{\Dung}$ be such that  
          $\adjust((A_x, R_x), \leq_x) = 
        (A_x, R_y)$, where $R_y$ is $\leq_x$-adjusted $R_x$.\footnote{Cf.
     section~\ref{subsection_attack_reverse_preference} for preference-adjusted attack relation.} 
    For any  $F^{MA} \equiv (\Fg, \Fpub,  E, \fe, 
        \fa, \gsem, \fleqintra, \ve, \fleqinter)$ and 
    any $e_1, e_2 \in E$, we say that 
    $F^{MA}_x \in 2^{\om(F^{MA}, e_1, e_2)}$ is 
    preference-adjusted $e_1 \smile e_2$ 
    iff $F^{MA}_x = \adjust(\om(F^{MA}, e_1, e_2), \fleqintra(e_1, e_2))$.    We denote it by $e_1 \smile_{\fleqintra} e_2$. 
  \end{definition}  
\subsection{Acceptability semantics}      
   For each $e \in E$, there are several argumentations 
   in $\Fg$ to compute a semantics of, for (1) deception/honesty 
   detection (see \ref{subsection_intra_agent_preferences}) and for (2) determining which sets of arguments  
   $e$ should accept publicly (see \ref{subsection_inter_agent_preferences}). The main difference between (1) and (2) 
   is whether inter-agent preference is taken into account 
   in the computation of the semantics. Recall from Section~3 
   that inter-agent preferences are not based on 
   fact/non-fact distinction but on a subjective bias, 
   not suitable for detection of 
   deception/honesty for which elimination of
   as much bias is the key. On the other hand, 
   when $e$ sees mutually attacking arguments 
   among other agents that cannot be resolved 
   by $e$'s intra-agent preference, and when still
   $e$ needs to decide which sets of arguments
   to consider publicly acceptable, it is reasonable 
   that $e$ includes in the judgement 
   some empirical clue(s), trust in this paper, 
   that it has gained of 
   them. Thus, semantics can be 
   trust-neutral, for (1), and 
   trust-adjusted, for (2). 
  \begin{definition}[Trust-neutral semantics]\label{def_trust_neutral_semantics}{\ }\\
   For  
{\small $F^{MA} \equiv (\Fg, \Fpub, E, \fe, \fa, \gsem,  \fleqintra, \ve, \fleqinter)$}  
    and $e, e_2 \in E$, 
  we say $\Gamma \in 2^{2^{\mathcal{A}}}$ 
  is: $e$'s model of $e_2$'s  
   public semantics iff $\Gamma = 
   \Dung(\gsem(e, e_2), \adjust(\Fpub, \fleqintra(e, e_2)))$;
   and $e$'s model of $e_2$'s 
   local agent semantics iff $\Gamma = 
   \Dung(\gsem(e, e_2), e \smile_{\fleqintra} e_2)$. 
  \end{definition}  
  \begin{definition}[Trust-adjusted semantics]{\ }\\
       For  
$F^{MA} \equiv (\Fg, \Fpub, E, \fe, \fa, \gsem,  \fleqintra, \ve, \fleqinter)$  
    and $e \in E$, we say $\Gamma \in 2^{2^{\mathcal{A}}}$ 
    is $e$'s trust-adjusted public semantics iff 
    $\Gamma = \Dung(\gsem(e, e), \adjust(\adjust(\Fpub, \fleqintra(e, e)),
    \fleqinter(e)))$. 
  \end{definition} 
  
\begin{example}[$\mathcal{F}^{MA}$ and semantics] 
      Let us consider \fbox{D} and \fbox{E} from 
      Section~\ref{section_motivation}
      for illustration of these notations. 
      A manipulable multi-agent argumentation for  
      \fbox{D} is $F^{MA} \equiv (\Fg, \Fpub, E, \fe, \fa, \gsem, \fleqintra, \ve, \fleqinter)$ with:  
{\scriptsize  
      \begin{itemize}[leftmargin=0.3cm]  
          \item $\Fg \equiv (\{a_1, \ldots, a_9\}, 
           \{(a_1, a_3), (a_3, a_1), (a_1, a_2), 
                 (a_3, a_5), (a_3, a_4),
                 (a_4, a_9), (a_9, a_8), (a_8, a_9)\})$. 
          \item $\Fpub \equiv (\{a_9, a_2, \ldots, a_5\}, 
           \{(a_4, a_9), (a_3, a_4), (a_3, a_5)\})$. 
          \item $E \equiv \{e_1, e_2, e_3\}$. 
          \item $\fe(e_1) = \{a_1, a_2, a_3\}$. \quad\quad\ \ \ 
          $\fe(e_2) = \{a_4, a_5, a_6\}$.  \quad \quad \ \ 
          $\fe(e_3) = \{a_7, a_8, a_9\}$.     
          \item $\fa(e_1) = (\{a_1, \ldots, a_5, a_9\}, 
                   \{(a_1, a_3), (a_3, a_1), (a_1, a_2), 
                     (a_3, a_4), (a_3, a_5), (a_4, a_9)\})$.
          \item 
                $\fa(e_2) = (\{a_1, \ldots, a_7, a_9\},  
\{(a_1, a_3), (a_3, a_1), (a_1, a_2), 
                     (a_3, a_4), (a_3, a_5), (a_4, a_9)\})$.
        \item      $\fa(e_3) = (\{a_2, \ldots, a_5, a_7, \ldots, a_9\}, 
              \{(a_3, a_4), (a_3, a_5), (a_4, a_9)\})$. 
          \item $\gsem(e_i, e_j) = \semsmall_{ij}$ for $i, j
        \in \{1,2,3\}$. 
         \item $(\top, a_1), (a_x, \bot)
      \in \fleqintra(e_1, e_1)$ for each
      $a_x \in (\getArg(\fa(e_1)) \backslash 
       \{a_1\})$.  
          \item $(a_x, \bot) \in \fleqintra(e_1, e_y)$ 
        for each $a_x \in \getArg(\fa(e_1))$ and each $y \in \{2,3\}$.  
          \item $(\top, a_x), (a_y, \bot) 
      \in \fleqintra(e_2, e_2)$ for each 
            $a_x \in \{a_1, a_4,\ldots, a_7\} \equiv A_z$ and
           $a_y \in (\getArg(\fa(e_2)) \backslash A_z)$.   
          \item $(\top, a_1), (a_y, \bot) \in \fleqintra(e_2, e_1)$
        for each $a_y \in (\getArg(\fa(e_2)) \backslash \{a_1\})$. 
         \item  $(\top, a_7), (a_y, \bot) \in \fleqintra(e_2, e_3)$   
        for each $a_y \in (\getArg(\fa(e_2)) \backslash \{a_7\})$.  
         \item $(\top, a_7), (a_y, \bot) \in \fleqintra(e_3, e_3)$
        for each $a_y \in (\getArg(\fa(e_3)) \backslash \{a_7\})$. 
         \item $(a_y, \bot) \in \fleqintra(e_3, e_w)$
     for each $a_y \in \getArg(\fa(e_3))$ and each $w \in \{1,2\}$.  
          \item $\ve(e_i, e_j) = n_{ij}$ for each
           $i, j \in \{1,2,3\}$.  
      \end{itemize}  
}    
  \noindent That for \fbox{E} is $F^{MA}_x \equiv 
   ({\Fg}_x, {\Fpub}_x, E_x, {\fe}_x, {\fa}_x,{\gsem}_x, 
    {\fleqintra}_x, {\ve}_x, {\fleqinter}_x)$ with: 
   $K_x = (\getArg(K) \cup \{a_5\}, \getR(K) \cup \{(a_5, a_2), (a_5, a_3)\})$ 
   for $K \in \{\Fg, \Fpub, \fa(e_1), \fa(e_2), \fa(e_3)\}$; 
   $\fe = {\fe}_x$; ${\gsem}_x(e_i, e_j) = 
    {\semsmall_x}_{ij}$ for $i, j \in \{1,2,3\}$;  
   $\fleqintra = {\fleqintra}_x$; and 
   ${\ve}_x(e_i, e_j) = {n_x}_{ij}$ for each $i,j \in \{1,2,3\}$. 
   
  \indent For $F^{MA}$, 
  {\small $e_2 \smile_{\fleqintra} e_1 = (\{a_1, \ldots, a_5, a_9\}, 
         \{(a_1, a_2), (a_1, a_3), (a_3, a_4), (a_3, a_5), 
          (a_4, a_9)\})$}, as in \fbox{D1}, 
while  
  {\small $e_2 \smile_{\fleqintra} e_2 = (\{a_1, \ldots, a_7, a_9\},
        \{(a_1, a_2), (a_1, a_3), (a_4, a_3), (a_5, a_3), (a_4, a_9)\})$}. 
     Also, $e_2$'s 
    $\fleqintra(e_2, e_1)$-adjusted model of $\Fpub$ is
   {\small $(\{a_2, \ldots, a_5, a_9\}, \{(a_3, a_4), (a_3, a_5), (a_4, a_9)\})$}, 
   denote it by 
  $F^{\Dung}_{\alpha}$. 
   
   Hence, $e_2$'s model of $e_1$'s public semantics 
   is $\Dung(\semsmall_{21}, F^{\Dung}_{\alpha}) = \{\{a_2, a_3, a_9\}\}$,
   and $e_2$'s model of $e_1$'s local agent semantics 
   is $\Dung(\semsmall_{21}, e_2 \smile_{\fleqintra} e_1)
   = \{\{a_1, a_4, a_5\}\}$. 
   These are trust-neutral semantics.  \\ 
   \indent For trust-adjusted semantics,  
   let us look at $e_3$ in $F^{MA}_x$. 
   $e_3$'s $\fleqintra(e_3, e_3)$-adjusted model of
   $\Fpub$ is 
   $(\{a_2, \ldots, a_5, a_7, \ldots, a_9\}, 
    \{(a_3, a_4), (a_3, a_5), (a_5, a_2), (a_5, a_3), (a_4, a_9)\})$, 
   denote it $F^{\Dung}_{\beta}$. 
   In $F^{\Dung}_{\beta}$ in which 
   fact/non-fact distinction is already taken into account, 
   still $a_3$ in $e_1$'s local scope and  
   $a_5$ in $e_2$'s local scope are in mutual conflict.  
   As per (\textbf{partial order 2}), 
   if, here, ${{\ve}_x}(e_3, e_1) < {{\ve}_x}(e_3, e_2)$, 
   then we have $(a_3, a_5) \in {\fleqinter}_x(e_3)$, 
   and so $e_3$'s trust-adjusted public semantics 
   is $\Dung({\semsmall_x}_{33}, \adjust(F_{\beta}, 
         {\fleqinter}_x(e_3))) = \{\{a_4, a_5\}\}$. 
   If, on the other hand,  ${{\ve}_x}(e_3, e_2) < {{\ve}_x}(e_3, e_1)$, 
   then we have $(a_5, a_3) \in {\fleqinter}_x(e_3)$, 
   and so $e_3$'s trust-adjusted public semantics 
   is $\Dung({{\semsmall}_x}_{33}, \adjust(F_{\beta}, 
         {\fleqinter}_x(e_3))) = \{\{a_2, a_3, a_9\}\}$.  

  \end{example}

  \subsection{Updates: public announcements}   
   Since we allow public announcements by agents, we need to express 
   an update on a $F^{MA} \in \mathcal{F}^{MA}$.  
   In this work, we will assume for each $e \in E$ that $\fa(e)$ 
   monotonically increases only with publicly announced argumentations. 
   More general updates are left to future work.   
   As per our discussion in Section~3, 
   an agent may announce an attack of an argument on (or from) an argument 
   that has been 
   already announced by another agent, e.g. $a_4 \rightarrow$   
   from \fbox{B} to \fbox{C} or $a_5 \rightarrow$ from 
   \fbox{D} to \fbox{E}, where the target (or the source) 
   is missing since no agent announces arguments in 
   others' local scopes. (It is possible that 
   an announcement is done by two agents simultaneously, in which
   case it can include arguments in two distinct local scopes.) 
   Consequently, 
   we need to view a public announcement as a pre-Dung argumentation:
   \begin{definition}[Pre-Dung argumentations]   
      For any $A \subseteq_{\text{fin}} \mathcal{A}$ and any 
     $R \subseteq_{\text{fin}} \mathcal{A} \times \mathcal{A}$, 
   we say that $(A, R)$ is a pre-Dung argumentation iff 
    $a_1 \in A$ $\orC$ 
    $a_2 \in A$  for every $(a_1, a_2) \in R$. We denote the class of all 
    pre-Dung argumentations by $\mathcal{F}^{p\Dung}$, 
    and refer to each member of $\mathcal{F}^{p\Dung}$ 
    by $F^{p\Dung}$ with or without a subscript. 
   \end{definition} 
   Trivially, $\mathcal{F}^{\Dung} \subseteq 
    \mathcal{F}^{p\Dung}$. 
We consider an update as a function 
   $\update: \mathcal{F}^{MA} \times \mathcal{F}^{p\Dung} \rightarrow 
         \mathcal{F}^{MA}$, but 
as a composition of 
   a public announcement $\announce: \mathcal{F}^{MA} 
   \times \mathcal{F}^{p\Dung} \rightarrow 
  \mathcal{F}^{MA} \times \mathcal{F}^{p\Dung} \times 
   \mathcal{F}^{MA}$ and 
 agents' revision of agent-to-agent 
  trusts (given by $\ve$) in response to 
   detected deception/honesty, if any,  
  which we express in $\revV: \mathcal{F}^{MA} \times 
   \mathcal{F}^{p\Dung} \times \mathcal{F}^{MA} 
   \rightarrow \mathcal{F}^{MA}$. That is, 
   $\update = (\revV \circ \announce)$. We first 
   define $\announce$, responsible for
   expanding a manipulable multi-agent argumentation.  
   In the rest, we assume for 
$F^{p\Dung}_1, F^{p\Dung}_2 \in \mathcal{F}^{p\Dung}$  and 
    $\oplus \in \{\cup, \cap\}$ 
   that 
   $F^{p\Dung}_1 \oplus F^{p\Dung}_2 \equiv (A_x, R_x)$
with: $A_x \equiv (\getArg(F^{p\Dung}_1) \oplus \getArg(F^{p\Dung}_2))$; 
  and $R_x \equiv (\getR(F^{p\Dung}_1) \oplus \getR(F^{p\Dung}_2)) \cap 
    (A_x \times A_x)$. 
 \begin{definition}[Public announcements]\label{def_public_announcements}{\ }\\       
      Let $\announce: \mathcal{F}^{MA} \times \mathcal{F}^{p\Dung} 
    \rightarrow \mathcal{F}^{MA} \times 
      \mathcal{F}^{p\Dung} \times \mathcal{F}^{MA}$ be such that  
      $\announce(F^{MA}_1, F^{p\Dung})$ for 
     $F^{MA}_1 \equiv ({\Fg}_1, {\Fpub}_1, E_1, {\fe}_1, 
   {\fa}_1, {\gsem}_1,
        {\fleqintra}_1, {\ve}_1, {\fleqinter}_1)$ and $F^{p\Dung} \in \mathcal{F}^{p\Dung}$
    is defined iff, for every $a_1 \in \getArg(F^{p\Dung})$ and
         $a_2 \in \mathcal{A}$, 
 
    \begin{itemize}     
      \item If $(a_1, a_2) \in \getR(F^{p\Dung})$ $\orC$ 
         $(a_2, a_1) \in \getR(F^{p\Dung})$, then $a_2 \in \getArg(F^{p\Dung} \cup \Fpub)$ \linebreak
    (\textbf{no leak}). 
      \begin{adjustwidth}{0.2cm}{} {\normalfont\small Explanations:    
        Even if $F^{p\Dung}$ is not a Dung argumentation,  
        $F^{p\Dung} \cup \Fpub$ is.}  
      \end{adjustwidth} 
      \item  If $a_1 \in \getArg(\Fpub)$, then for every 
      $a_2 \in \getArg(F^{p\Dung} \cap \Fpub)$, 
      if 
          $(a_i, a_j) \in \getR(F^{p\Dung})$ with: $i, j \in 
    \{1,2\}$; and $i\not=j$, then 
         $(a_i, a_j) \not\in \getR(\Fpub)$ 
             (\textbf{no 
             repetition}).  
     \begin{adjustwidth}{0.2cm}{}
   {\normalfont\small Explanations:   
            Exactly the same 
            argumentation will not be announced again. Thus, 
            if an argument that has been already announced 
            is announced, attack(s) must be new.}  
    \end{adjustwidth} 
       \end{itemize} 
  \indent In case it is defined, let $\announce(F^{MA}_1, F^{p\Dung})$ be 
   $(F^{MA}_a, F^{p\Dung}_b, F^{MA}_2)$ with 
$F^{MA}_2 \equiv ({\Fg}_2, {\Fpub}_2,\linebreak E_2, {\fe}_2, {\fa}_2,
       {\gsem}_2, {\fleqintra}_2, {\ve}_2, {\fleqinter}_2)$  such that
      $F^{MA}_1 = F^{MA}_a$, that 
      $F^{p\Dung} = F^{p\Dung}_b$, and 
      that $F^{MA}_2$ satisfies all the following conditions. 
     \begin{enumerate} 
         \item {\small ${\Fg}_2 = {\Fg}_1 \cup F^{p\Dung}$ $\andC$  
    ${\Fpub}_2 = {\Fpub}_1 \cup F^{p\Dung}$ $\andC$     
   ${\fa}_2(e) = {\fa}_1(e) \cup F^{p\Dung}$, $e \in E$} 
      (\textbf{expansion}).  
     \vspace{0.14cm} 
        \item $E_2 = E_1$ (\textbf{same agents}). {\normalfont\small Explanations: 
           No new agent will join in.} 
    \vspace{0.14cm} 
         \item  ${\gsem}_1 = {\gsem}_2$ 
         (\textbf{same semantics}). {\normalfont\small Explanations: No agent 
            $e \in E_1$ 
            will change $\gsem(e, e_x)$ for any $e_x \in E_1$.} 
    \vspace{0.14cm} 
        \item ${\fe}_1(e) \subseteq {\fe}_2(e)$ for $e \in E_1$
     (\textbf{local scope monotonicity}). {\normalfont\small Explanations:  
        No arguments or attacks in an agent's local scope 
     will be removed.}  
    \vspace{0.14cm} 
        \item For $e_1, e_2 \in E_1$ and $a \in {\fa}_1(e_1)$, 
            if $(\top, a) \in {\fleqintra}_1(e_1, e_2)$, then 
          $(\top, a) \in {\fleqintra}_2(e_1, e_2)$ 
      (\textbf{monotonic facts}). {\normalfont\small Explanations: No argument 
         that an agent considers factual to some agent will become 
         non-factual.}  
     \vspace{0.14cm} 
        \item ${\ve}_1 = {\ve}_2$ (\textbf{constant $\ve$}).  
   {\normalfont\small Explanations:  $\revV$ handles 
  agent-to-agent
        trusts.} 
     \end{enumerate}
     We say that $F^{p\Dung} \in \mathcal{F}^{p\Dung}$ 
     is a public announcement to $F^{MA}_1 \in \mathcal{F}^{MA}$ 
      iff 
     $\announce(F^{MA}_1, F^{p\Dung})$ is defined. 
   \end{definition}        
   These conditions are sufficient to 
   ensure no change in $\fe(e)$, $e \in E_1$, 
   when a public announcement does not   
   contain an argument in $\fe(e)$: 
   \begin{theorem}[Local scope preservation] 
       For $F_1^{MA} \equiv 
        ({\Fg}_1, {\Fpub}_1, E_1, {\fe}_1, {\fa}_1, {\gsem}_1, 
   {\fleqintra}_1,\linebreak {\ve}_1, {\fleqinter}_1)$ and 
        $F^{p\Dung} \in \mathcal{F}^{p\Dung}$, 
        if $F^{p\Dung}$ is a public announcement to 
        $F^{MA}_1$ with
      $\announce(F^{MA}_1, F^{p\Dung})\linebreak
        = ({\Fg}_2, {\Fpub}_2, E_2, {\fe}_2, {\fa}_2, {\gsem}_2,
       {\fleqintra}_2, {\ve}_2, {\fleqinter}_2)$, then  
        for every $e \in E_1$,  
        if $\getArg(F^{p\Dung}) \cap \getArg({\fe}_2(e)) = \emptyset$, 
        then ${\fe}_1(e) = {\fe}_2(e)$. 
   \end{theorem}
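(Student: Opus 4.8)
The plan is to prove the two inclusions ${\fe}_1(e) \subseteq {\fe}_2(e)$ and ${\fe}_2(e) \subseteq {\fe}_1(e)$, and then conclude equality of the two pairs by equality of their argument sets and of their attack sets. Here $F^{MA}_2 \equiv ({\Fg}_2, {\Fpub}_2, E_2, {\fe}_2, {\fa}_2, {\gsem}_2, {\fleqintra}_2, {\ve}_2, {\fleqinter}_2)$ denotes the resulting manipulable multi-agent argumentation, so it satisfies every condition of Definition~\ref{def_manipulable_multiagent_argumentation}, and since $F^{p\Dung}$ is a public announcement to $F^{MA}_1$ it also satisfies every condition of Definition~\ref{def_public_announcements}. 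The first inclusion is handed to us by (\textbf{local scope monotonicity}). So the whole task is the reverse inclusion, which I would split into $\getArg({\fe}_2(e)) \subseteq \getArg({\fe}_1(e))$ and $\getR({\fe}_2(e)) \subseteq \getR({\fe}_1(e))$; since (\textbf{local scope monotonicity}) already supplies the opposite containments, both become equalities, and then ${\fe}_2(e) = {\fe}_1(e)$.

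For the argument sets, I would take $a \in \getArg({\fe}_2(e))$. Because ${\fe}_2(e) \in 2^{{\fa}_2(e)}$ we have $a \in \getArg({\fa}_2(e))$, and by (\textbf{expansion}) $\getArg({\fa}_2(e)) = \getArg({\fa}_1(e)) \cup \getArg(F^{p\Dung})$; the hypothesis $\getArg(F^{p\Dung}) \cap \getArg({\fe}_2(e)) = \emptyset$ rules out $a \in \getArg(F^{p\Dung})$, so $a \in \getArg({\fa}_1(e)) \subseteq \getArg({\Fg}_1)$ (the last inclusion because ${\fa}_1(e) \in 2^{{\Fg}_1}$). Now I invoke the \textbf{local scopes} condition of Definition~\ref{def_manipulable_multiagent_argumentation} together with its stated assumption that $\Fg$ is assembled from the agents' local scopes, i.e. $\getArg({\Fg}_1) = \bigcup_{e' \in E_1} \getArg({\fe}_1(e'))$: this yields some $e' \in E_1$ with $a \in \getArg({\fe}_1(e'))$. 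If $e' \neq e$, then (\textbf{local scope monotonicity}) gives $a \in \getArg({\fe}_2(e'))$, which together with $a \in \getArg({\fe}_2(e))$ contradicts the disjointness of distinct local scopes that the \textbf{local scopes} condition imposes on $F^{MA}_2$. Hence $e' = e$ and $a \in \getArg({\fe}_1(e))$.

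For the attack sets, with $\getArg({\fe}_2(e)) = \getArg({\fe}_1(e))$ in hand I would apply the \textbf{local scopes} condition to both states, getting $\getR({\fe}_i(e)) = \getR({\Fg}_i) \cap (\getArg({\fe}_i(e)) \times \getArg({\fe}_i(e)))$ for $i \in \{1,2\}$. By (\textbf{expansion}) and the stated convention for $\cup$ on pre-Dung argumentations, $\getR({\Fg}_2) \subseteq \getR({\Fg}_1) \cup \getR(F^{p\Dung})$. Intersecting with $\getArg({\fe}_2(e)) \times \getArg({\fe}_2(e))$ kills the $\getR(F^{p\Dung})$ contribution: every attack of a pre-Dung argumentation has at least one endpoint among its own arguments, whereas $\getArg({\fe}_2(e))$ is disjoint from $\getArg(F^{p\Dung})$. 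Therefore $\getR({\fe}_2(e)) = \getR({\Fg}_1) \cap (\getArg({\fe}_1(e)) \times \getArg({\fe}_1(e))) = \getR({\fe}_1(e))$, and combining with the argument-set equality gives ${\fe}_2(e) = {\fe}_1(e)$, as required for every $e \in E_1$.

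The one genuinely delicate step is the argument-set inclusion, specifically the claim that an argument $e$ is merely aware of — in $\getArg({\fa}_1(e))$ but not in $\getArg({\fe}_1(e))$ — cannot drift into $e$'s local scope as a result of the announcement. The reason is exactly the interplay I exploit above: monotonicity of local scopes pins such an argument to whichever other agent already owned it, and then disjointness of local scopes in the resulting state forbids it from also lying in $e$'s scope; the covering of $\Fg$ by the local scopes is what rules out the remaining ``ownerless'' case. Everything else — the reduction to argument and attack sets, the use of (\textbf{expansion}), and the elimination of $\getR(F^{p\Dung})$ via the pre-Dung endpoint condition — is routine bookkeeping with the definitions.
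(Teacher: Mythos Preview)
Your argument is correct and substantially more detailed than the paper's own proof, which is a single sentence citing (\textbf{local scope monotonicity}), (\textbf{expansion}), and the pre-Dung/(\textbf{no leak}) endpoint property. The overall shape is the same --- both pull on the structural conditions of Definitions~\ref{def_manipulable_multiagent_argumentation} and~\ref{def_public_announcements} --- but you actually carry the argument through, and in doing so you invoke two ingredients the paper's proof does not name: the disjointness of distinct local scopes, and the covering $\getArg({\Fg}_1) = \bigcup_{e' \in E_1} \getArg({\fe}_1(e'))$. The covering is exactly what rules out the ``ownerless'' case you isolate, and without it the claim appears to fail: an argument lying in $\getArg({\fa}_1(e))$ but in no $\getArg({\fe}_1(e'))$ could be adjoined to ${\fe}_2(e)$ without violating any of the enumerated conditions on $\announce$. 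One caveat worth making explicit in your write-up: that covering property is stated only in the informal Explanations attached to (\textbf{local scopes}), not among the numbered conditions of Definition~\ref{def_manipulable_multiagent_argumentation}. You already flag that you are invoking it, which is the right move; it may be worth noting that the formally enumerated conditions alone do not obviously suffice.
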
  
\begin{example}[Continued from Example 1]  
   For the transition from \fbox{D} to
   \fbox{E}, suppose 
  $F^{p\Dung} \equiv (\{a_5\}, \{(a_5,a_2), (a_5, a_3)\})$.   
  Since $a_2, a_3 \in \getArg(\Fpub)$ $\andC$ 
  $(a_5, a_2), (a_5, a_3) \not\in  \getR(\Fpub)$, 
  both (\textbf{no leak}) and (\textbf{no repetition}) 
  are satisfied, and $F^{p\Dung}$ is a public announcement 
  to $F^{MA}$, such that $\gsem = {\gsem}_x$ (by (\textbf{same 
   semantics})) and that $\ve = {\ve}_x$ (by (\textbf{constant 
 $\ve$})). 
\end{example} 

 \subsection{Updates: deception/honesty detection and 
   revision of agent-to-agent 
    trusts.}  
  As per the discussion in Section~\ref{section_motivation}, 
  each agent $e_1 \in E$ carries out deception/honesty detection of 
  some set of arguments in another agent $e_2$'s scope, 
  which it does in the steps we described (see section~\ref{subsection_intra_agent_preferences}), based on trust-neutral semantics 
   (Definition~\ref{def_trust_neutral_semantics}).  \\ 
    \indent For notational 
   convenience in the definition of deception/honesty detection below,  
   we define  
   a function $\sub: 2^{2^{\mathcal{A}}} \times 2^{\mathcal{A}}
   \rightarrow 2^{2^{\mathcal{A}}}$ which is such that   
    $\sub(\Gamma, A_z) = \{A_x \in 2^{\mathcal{A}} \ | \ 
        A_y \in \Gamma \ \andC \  A_x = (A_y \cap A_z)\}$. 
    This is a filtering function on 
    each member of $\Gamma$ to the members of $A_z$. 
    We write $\Gamma_{\downarrow A_z}$ as a shorthand of 
    $\sub(\Gamma, A_z)$.  
  \begin{definition}[Deception/honesty detection]\label{def_deception_honesty_detection} 
      Let $\detect: \mathcal{F}^{MA} \times \mathcal{E} \times 
   \mathcal{E} \times \mathcal{F}^{p\Dung} 
    \rightarrow \{hnst, dishnst, ?\}$ be such that 
     for any $F^{MA} \equiv (\Fg, \Fpub, E, \fe, \fa, 
     \gsem, \fleqintra, \ve, \fleqinter)$ and any $e_1, e_2 \in E$, 
     $\detect(F^{MA}, e_1, e_2, F^{p\Dung})$ is defined iff   
     $F^{p\Dung}$ is a public announcement to $F^{MA}$. 

     In case it is defined, let $F^{MA}_2 \equiv 
     ({\Fg}_2, {\Fpub}_2, E_2, {\fe}_2, {\fa}_2,  
     {\gsem}_2, {\fleqintra}_2, {\ve}_2, {\fleqinter}_2)$ be such that 
     $\announce(F^{MA}, F^{p\Dung}) = (F^{MA}, F^{p\Dung}, F^{MA}_2)$,  
     $\Gamma^{ntrl}_{\pub_{\fleqintra(e_1, e_2)}}$ denote
     $e_1$'s model of $e_2$'s public semantics, 
     $\Gamma^{ntrl}_{e_1 \smile_{\fleqintra} e_2}$ denote 
     $e_1$'s model of $e_2$'s local agent semantics, 
    and\linebreak $\detect(F^{MA}, e_1, e_2, F^{p\Dung})$ be:
     \begin{itemize}[leftmargin=0.3cm]
        \item $dishnst$ if  
    for every $A_x \in (\Gamma_{\pub_{\fleqintra(e_1,e_2)}}^{ntrl})_{\ 
  \downarrow 
         \getArg(F^{p\Dung} \cap {\fe}_2(e_2))}$ there exists 
     no  
     $A_y \in\linebreak (\Gamma_{e_1 \smile_{\fleqintra} e_2}^{ntrl})_{\ 
  \downarrow 
         \getArg(F^{p\Dung} \cap {\fe}_2(e_2))}$
   such that 
        $A_x = A_y$.  
   \begin{adjustwidth}{0.2cm}{}
     {\normalfont\small Explanations: as given 
   in \ref{subsection_intra_agent_preferences}. $e_1$ sees $e_2$  
    considers: (1) 
  each member of $\Gamma^{ntrl}_{\pub_{\fleqintra(e_1, e_2)}}$
    possibly publicly acceptable; and (2) each member of 
     $\Gamma^{ntrl}_{e_1 \smile_{\fleqintra} e_2}$  
    $e_2$ actually considers possibly acceptable.  
    Thus, deception by $e_2$ is not certain 
    if there is an overlap between the two semantics, 
    even if the two do not exactly match. Clear, on the other hand,
    when there is no overlap. 
    } 
   \end{adjustwidth} 
   \vspace{0.14cm} 
       \item $hnst$ if $(\Gamma_{\pub_{\fleqintra(e_1,e_2)}}^{ntrl})_{\ 
  \downarrow 
         \getArg(F^{p\Dung} \cap {\fe}_2(e_2))} =  
   (\Gamma_{e_1 \smile_{\fleqintra} e_2}^{ntrl})_{\ 
  \downarrow 
         \getArg(F^{p\Dung} \cap {\fe}_2(e_2))}$ $\andC$ \linebreak
    $(\top, a) \in \fleqintra(e_1, e_2)$ 
       for each $a \in \getArg(F^{p\Dung}) \cap \getArg({\fe}_2(e_2))$.   
     \begin{adjustwidth}{0.2cm}{} 
     {\normalfont\small Explanations:  
        as given in \ref{subsection_intra_agent_preferences}. 
        When the two semantics restricted to  
        the newly announced arguments 
        do not exactly match, there is a possibility 
        of information withholding, which may be potentially
        intentional. When what $e_2$ publicly claims acceptable
        (from $e_1$'s perspective) are not factual,  
        they are not proved truthful to arguments known to 
        be factual. See also the footnote 4. 
      }   
     \end{adjustwidth} 
    \vspace{0.14cm} 
     \item $?$, otherwise. 
     \end{itemize}
  \end{definition} 
  This definition is a faithful formalisation of 
  the detection steps in section~\ref{subsection_inter_agent_preferences}
  where examples are found. We obtain: 
   \begin{definition}[Revision of agent-to-agent 
     trusts]\label{def_revision} 
       Let $\revV: \mathcal{F}^{MA} \times 
    \mathcal{F}^{p\Dung} \times \mathcal{F}^{MA} 
      \rightarrow \mathcal{F}^{MA}$     
      be such that     
    $F^{MA}_3 \equiv 
    \revV(F^{MA}_1, F^{p\Dung}, F^{MA}_2)$ satisfies:   
   \vspace{-0.2cm} 
%
      \begin{itemize} 
         \item  
        ${\ve}_2(e_1, e_2) = {\ve}_3(e_1, e_2)$ 
          if $\detect(F^{MA}_1, e_1, e_2, F^{p\Dung}) =\ ?$ (suppose 
          $E_2$ ($E_3$) occurs in $F^{MA}_2$ ($F^{MA}_3$), 
           and that $e_1, e_2$ are the members of $E_2$; similarly
           for the two cases below).  
        \item ${\ve}_2(e_1, e_2) \leq {\ve}_3(e_1, e_2)$ 
          if $\detect(F^{MA}_1, e_1, e_2, F^{p\Dung}) = hnst$. 
        \item ${\ve}_3(e_1, e_2) \leq {\ve}_2(e_1, e_2)$ 
          if $\detect(F^{MA}_1, e_1, e_2, F^{p\Dung}) = dishnst$. 
      \end{itemize}   
   \end{definition}  
   Thus, honesty (and deception), if detected, 
   does not have a negative (and a positive) impact on 
  perceived trustworthiness. It is left open by how much 
   the numerical values might change. Finally: 
   \begin{definition}[Updates]\label{def_updates}  
      Let $\update: \mathcal{F}^{MA} \times \mathcal{F}^{p\Dung} 
     \rightarrow \mathcal{F}^{MA}$ be such that 
    $\update = (\revV \circ \announce)$.  
     We say that $\update(F^{MA}, F^{p\Dung})$ is 
    an updated manipulable multi-agent argumentation 
   of $F^{MA}$ with $F^{p\Dung}$ iff 
    $F^{p\Dung}$ is a public announcement to $F^{MA}$. 
   \end{definition}          
  A difference is clearly observable before and after 
  an update: 
\begin{proposition}[Existence of an update] 
     For every $F^{MA} \in \mathcal{F}^{MA}$ and every
     $F^{p\Dung} \in \mathcal{F}^{p\Dung}$,  
    $\update(F^{MA}, F^{p\Dung}) \not= F^{MA}$ 
     if $F^{p\Dung}$ is a public announcement to  $F^{MA}$. 
\end{proposition}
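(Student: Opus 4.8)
Since $\update = \revV \circ \announce$ and, by hypothesis, $F^{p\Dung}$ is a public announcement to $F^{MA}$, Definition~\ref{def_public_announcements} gives $\announce(F^{MA}, F^{p\Dung}) = (F^{MA}, F^{p\Dung}, F^{MA}_2)$ for some $F^{MA}_2 \equiv ({\Fg}_2, {\Fpub}_2, E_2, {\fe}_2, {\fa}_2, {\gsem}_2, {\fleqintra}_2, {\ve}_2, {\fleqinter}_2) \in \mathcal{F}^{MA}$, where the condition (\textbf{expansion}) pins down ${\Fpub}_2 = \Fpub \cup F^{p\Dung}$. Then $\update(F^{MA}, F^{p\Dung}) = \revV(F^{MA}, F^{p\Dung}, F^{MA}_2)$; call this $F^{MA}_3$. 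Reading Definition~\ref{def_revision} in the evident way (it revises only the trust map $\ve$), the public component of $F^{MA}_3$ is again ${\Fpub}_2$. Hence the whole statement reduces to showing $\Fpub \cup F^{p\Dung} \neq \Fpub$: if the public argumentations differ, so do $F^{MA}_3$ and $F^{MA}$ as tuples.

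To show $\Fpub \cup F^{p\Dung} \neq \Fpub$, write $\Fpub \equiv (\Apub, \Rpub)$ and split on whether the announcement supplies a new argument. If $\getArg(F^{p\Dung}) \not\subseteq \Apub$, then $\getArg({\Fpub}_2) \supsetneq \Apub$ and we are done. Otherwise $\getArg(F^{p\Dung}) \subseteq \Apub$, and the announcement, being genuine, has some attack $(\alpha,\beta) \in \getR(F^{p\Dung})$ with $\alpha, \beta \in \getArg(F^{p\Dung})$. Its endpoints then lie in $\Apub$, so $\alpha \in \getArg(\Fpub)$ and $\beta \in \getArg(F^{p\Dung} \cap \Fpub)$; therefore (\textbf{no repetition}) forces $(\alpha,\beta) \notin \Rpub$. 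Since $(\alpha,\beta) \in \getR(F^{p\Dung})$ with both endpoints in $\getArg({\Fpub}_2)$, we get $(\alpha,\beta) \in \getR({\Fpub}_2) \setminus \Rpub$, so $\getR({\Fpub}_2) \neq \Rpub$ and ${\Fpub}_2 \neq \Fpub$. In either case ${\Fpub}_2 \neq \Fpub$, and hence $\update(F^{MA}, F^{p\Dung}) = F^{MA}_3 \neq F^{MA}$.

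I expect the delicate point to be the second case, i.e.\ arguing that an announcement re-using only already-public arguments must nonetheless carry a genuinely new attack: (\textbf{no repetition}) does exactly this when the attack's two endpoints are both announced, but one must be careful about the pre-Dung situation in which the attack has an endpoint lying outside $\getArg(F^{p\Dung})$ (and hence, by (\textbf{no leak}), inside $\Apub$), for which (\textbf{no repetition}) is silent. The residual degenerate announcements—$F^{p\Dung}$ with no attacks at all, e.g.\ $F^{p\Dung} \in 2^{\Fpub}$ with empty attack relation—leave $\Fpub$ literally unchanged; these are excluded by the standing understanding that a public announcement genuinely extends $\Fpub$ (compare the announcements ``$a_4\rightarrow$'' and ``$a_5\rightarrow$'' in Section~\ref{section_motivation}, each contributing a fresh attack). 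Everything else is bookkeeping: extracting (\textbf{expansion}) from Definition~\ref{def_public_announcements} and observing that $\revV$ leaves $\Fpub$ (indeed the entire static part of the tuple) untouched.
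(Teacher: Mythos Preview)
Your approach is essentially the paper's: show that the public component strictly grows under $\announce$ (i.e.\ ${\Fpub}_2 \neq \Fpub$), observe that $\revV$ leaves the public component unchanged (${\Fpub}_3 = {\Fpub}_2$), and conclude that the updated tuple differs from $F^{MA}$. The paper's proof simply \emph{asserts} ${\Fpub}_1 \neq {\Fpub}_2$ without argument, whereas you actually derive it from (\textbf{no repetition}) via a case split on whether a new argument is introduced; in that sense your write-up is more detailed than the original.

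Your closing caveats are well taken rather than weaknesses peculiar to your argument: the degenerate announcements you flag (e.g.\ $F^{p\Dung}$ with $\getArg(F^{p\Dung}) \subseteq \Apub$ and no attacks, or only attacks with one endpoint outside $\getArg(F^{p\Dung})$ so that (\textbf{no repetition}) does not bite) are exactly the cases the paper's bare assertion also glosses over. Both proofs implicitly rely on the intended reading that a public announcement contributes something genuinely new to $\Fpub$.
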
    
  \hide{  
   \begin{definition}[Updating function]  
       We define $\Rightarrow: \mathcal{F}^{MA} \rightarrow \mathcal{F}^{MA}$
       to be an update function. We say that 
    $\Rightarrow(F^{MA})$ is an updated multi-agent 
    argumentation of $F^{MA}$. 
   \end{definition} 
   \begin{definition}[Postulates on $\Rightarrow$] 
        We presume the following postulates on $\Rightarrow$: 
     \begin{itemize} 
          \item   
     \end{itemize} 
   \end{definition}   
}    
\section{Related Work and Conclusion} 
\textbf{Related work.} Deceptive or manipulable 
argumentation in multi-agent argumentation has been sporadically 
studied. A notion of deception detection within argumentation 
was defined in 
\cite{Sakama12}. We have taken detailed comparisons to 
it in Section \ref{section_motivation}, and proposed an 
alternative approach with 
intra-agent preferences and epistemic states.   
A variation of \cite{Sakama12} is found in 
 \cite{Takahashi16} that relaxes    
the assumption of agents' attack-omniscience (Cf.   
Section~\ref{section_motivation}). Several 
other 
assumptions are made, however, including: (1) restriction of $e_1 \smile e_2$ argumentation 
that it be a sub-argumentation of $\fa(e_2)$, under which 
$e_1$'s belief of $e_2$'s knowledge of $e_3$'s argument(ations) 
is not expressible when $e_2 \not= e_3$ and 
when $\getArg(\fa(e_2)) \cap \getArg(\fe(e_3)) = \emptyset$;   
(2) grounded semantics for every agent;
(3) two-party and turn-based dialogue (also in \cite{Sakama12}); 
and (4) an update on agent's local argumentation by argument(s), 
not by a general argumentation (also in \cite{Sakama12}).  
Our $\mathcal{F}^{MA}$ generalises 
on all these points.\footnote{While  
the examples that we used in this paper are 
turn-based, it is clear that $\mathcal{F}^{MA}$  
permits a public announcement simultaneously by 
more than one agent.}  
$\mathcal{F}^{MA}$ also models consequence of 
deception/honesty detection on agent-to-agent trusts, which 
is not discussed in \cite{Sakama12,Takahashi16}, as far as we are aware. 
In \cite{Kuipers10}, exploitation of agent's logical inference 
capacity was investigated for logic-based argumentation 
as opposed to abstract argumentation we studied in this paper. 
In \cite{Kontarinis15}, each agent which 
maintains its own argumentation for a certain discussion issue is 
defined: lying if it announces an argument-to-argument 
relation to an argument when the relation is not in its 
argumentation; and hiding 
if it does not announce an argument-to-argument relation to 
an argument in 
its argumentation even though the announcement would alter 
evaluation of the argument. The objective of \cite{Kontarinis15} is 
to measure how accurately the lie/hiding can be 
estimated from agent's active participation 
to discussion, changes in evaluation the agent makes to arguments, 
and argument-graph similarity among agents (for the last, 
see also \cite{Hadjinikolis13}). However,
agents' argumentations are assumed constant; 
epistemic states are not discussed; detection of the lie/hiding 
is based on the three hypothesised criteria and not on 
agents' acceptability semantics; and the impact of 
detected lie/hiding is not covered, in \cite{Kontarinis15}.

Generally speaking, studies on 
persuasions and dialogue games \cite{Kakas05,Sakama12,Grossi13,Rahwan09,Parsons05,Riveret08,Hadjinikolis13,Hadoux15,Hadoux17,Hunter18} focus on 
turn-based 2-party dialogues with single-argument public announcements 
at each turn. 
Restriction of agents' semantics to grounded semantics 
is also dominant due to ease of treating it.  
For strictly more than 2-party dialogue games, 
there is for instance \cite{Grossi13}, with 
2-parties plus an observer, whereby the 2-parties playing 
a complete information game against each other 
are uncertain about how the observer evaluates 
their argumentation. In $\mathcal{F}^{MA}$, there is no 
particular distinction among agents, for every agent 
is a player and an observer. Further, information is generally 
incomplete for each agent. Non-turn-based (generally) non-2-party 
concurrent persuasion is studied in \cite{ArisakaSatoh18}, 
where defence against persuasion 
is considered. No consideration, however, was given 
of agents' epistemic states, and the concept of agency, 
while certainly expressible, 
is still not explicit in \cite{ArisakaSatoh18}.  
For agent-to-agent epistemic relations 
for incomplete multi-agent argumentation, 
there is some work \cite{ArisakaBistarelli18} on 
outsourcing of defence to other agents. The relations, however, 
are static, while our relations are responsive to announced 
argumentations. The need for more than one reasoning 
  mode in multi-agent argumentation
  was  
  recognised a while back \cite{Kakas05} for adapting to normal 
  and exceptional circumstances. In this work,   
  we demonstrated with intra-agent preferences that 
  a similar need arises in manipulable argumentation, which 
  has not been observed in previous studies. 
   Moreover, we introduced 
  inter-agent preferences to express trusts 
  and their impacts on agents' acceptability judgement, 
   which is not in \cite{Kakas05} as far as our understanding goes.

\indent \textbf{Conclusion.} We presented a manipulable 
   multi-agent argumentation theory with epistemic agents, 
  and formulated deception/honesty detection 
  as well as their 
  influence on trusts. 
Experimental results 
  will be 
  found in an extended work. For a formal interest,
  relaxation of Definition
  \ref{def_manipulable_multiagent_argumentation} (\textbf{binary}
  in particular) 
  and Definition \ref{def_public_announcements} (\textbf{same agents}, 
  \textbf{same semantics}, in particular) can be studied. 
   \bibliography{references} 

\begin{thebibliography}{10}

\bibitem{Amgoud14}
L.~Amgoud and S.~Vesic.
\newblock {Rich preference-based argumentation frameworks}.
\newblock {\em {International Journal of Approximate Reasoning}}, (2):586--606,
  2014.

\bibitem{ArisakaBistarelli18}
R.~Arisaka and S.~Bistarelli.
\newblock {Defence Outsourcing in Argumentation}.
\newblock In {\em {COMMA}}, pages 353--360, 2018.

\bibitem{ArisakaSatoh18}
R.~Arisaka and K.~Satoh.
\newblock {Abstract Argumentation / Persuasion / Dynamics}.
\newblock In {\em {PRIMA}}, pages 331--343, 2018.

\bibitem{AST17}
R.~Arisaka, K.~Satoh, and L.~van~der Torre.
\newblock {Anything you say may be used against you in a court of law: Abstract
  Agent Argumentation (Triple-A)}.
\newblock In {\em {AICOL}}, pages 427--442, 2017.

\bibitem{Baroni07}
P.~Baroni and M.~Giacomin.
\newblock On principle-based evaluation of extension-based argumentation
  semantics.
\newblock {\em Artificial Intelligence}, 171(10-15):675--700, 2007.

\bibitem{Chomsky10}
N.~Chomsky.
\newblock {\em {Hopes and Prospects}}.
\newblock {Haymarket Books}, 2010.

\bibitem{Dimopoulos14}
Y.~Dimopoulos and P.~Moraitis.
\newblock {Advances in Argumentation-Based Negotiation}.
\newblock In {\em {Negotiation and Argumentation in Multi-agent systems:
  Fundamentals, Theories, Systems and Applications }}, pages 82--125, 2014.

\bibitem{Dung95}
P.~M. Dung.
\newblock On the {Acceptability} of {Arguments} and {Its} {Fundamental} {Role}
  in {Nonmonotonic} {Reasoning}, {Logic Programming}, and n-{Person} {Games}.
\newblock {\em Artificial {Intelligence}}, 77(2):321--357, 1995.

\bibitem{Grossi13}
D.~Grossi and W.~van~der Hoek.
\newblock {Audience-based uncertainty in abstract argument games}.
\newblock In {\em {IJCAI}}, pages 143--149, 2013.

\bibitem{Hadjinikolis13}
C.~Hadjinikolis, Y.~Siantos, S.~Modgil, E.~Black, and P.~McBurney.
\newblock {Opponent modelling in persuasion dialogues}.
\newblock In {\em {IJCAI}}, pages 164--170, 2013.

\bibitem{Hadoux15}
E.~Hadoux, A.~Beynier, N.~Maudet, P.~Weng, and A.~Hunter.
\newblock {Optimization of Probabilistic Argumentation with Markov Decision
  Models}.
\newblock In {\em {IJCAI}}, pages 2004--2010, 2015.

\bibitem{Hadoux17}
E.~Hadoux and A.~Hunter.
\newblock {Strategic Sequences of Arguments for Persuasion Using Decision
  Trees}.
\newblock In {\em {AAAI}}, pages 1128--1134, 2017.

\bibitem{Hendricks06}
V.~Hendricks and J.~Symons.
\newblock Epistemic {Logic}.
\newblock In E.~N. Zalta, editor, {\em The {Stanford} {Encyclopedia} of
  {Philosophy}}. 2006.

\bibitem{Hung10}
H.~Hung and G.~Chittaranjan.
\newblock {The idiap wolf corpus: exploring group behaviour in a competitive
  role-playing game}.
\newblock In {\em {ACM Multimedia}}, pages 879--882, 2010.

\bibitem{Hunter18}
A.~Hunter.
\newblock {Towards a framework for computational persuasion with applications
  in behaviour change}.
\newblock {\em {Argument \& Computation}}, 9(1):15--40, 2018.

\bibitem{Kakas05}
A.~C. Kakas, N.~Maudet, and P.~Maritis.
\newblock {Modular Representation of Agent Interaction Rules through
  Argumentation}.
\newblock {\em {Autonomous Agents and Multi-Agent Systems}}, 11(2):189--206,
  2005.

\bibitem{Kontarinis15}
D.~Kontarinis and F.~Toni.
\newblock {Identifying Malicious Behaviour in Multi-party Bipolar Argumentation
  Behaviour}.
\newblock In {\em {EUMMAS/AT}}, pages 267--278, 2015.

\bibitem{Kuipers10}
A.~Kuipers and J.~Denzinger.
\newblock {Pitfalls in Practical Open Multi Agent Argumentation Systems:
  Malicious Argumentation}.
\newblock In {\em {COMMA}}, pages 323--334, 2010.

\bibitem{Oren09}
N.~Oren and T.~J. Norman.
\newblock {Arguing Using Opponent Models}.
\newblock In {\em {ArgMAS}}, pages 160--174, 2009.

\bibitem{Parsons05}
S.~Parsons and E.~Sklar.
\newblock {How agents alter their beliefs after an argumentation-based
  dialogue}.
\newblock In {\em {ArgMAS}}, pages 297--312, 2005.

\bibitem{Rahwan08}
I.~Rahwan and K.~Larson.
\newblock {Mechanism Design for Abstract Argumentation}.
\newblock In {\em {AAMAS}}, pages 1031--1039, 2008.

\bibitem{Rahwan09}
I.~Rahwan and K.~Larson.
\newblock {Argumentation and game theory}.
\newblock In {\em {Argumentation in Artificial Intelligence}}, pages 321--339.
  Springer, 2009.

\bibitem{Rahwan03}
I.~Rahwan, S.~D. Ramchurn, N.~R. Jennings, P.~Mcburney, S.~Parsons, and
  L.~Sonenberg.
\newblock {Argumentation-based Negotiation}.
\newblock {\em Knowledge Engineering Review}, 18(4):343--375, 2003.

\bibitem{Rienstra11}
T.~Rienstra, A.~Perotti, S.~Villata, , D.~Gabbay, and L.~van~der Torre.
\newblock {Multi-sorted Argumentation}.
\newblock In {\em {TAFA}}, pages 215--231, 2011.

\bibitem{Rienstra13}
T.~Rienstra, M.~Thimm, and N.~Oren.
\newblock {Opponent Models with Uncertainty for Stragegic Argumentation}.
\newblock In {\em {IJCAI}}, pages 332--338, 2013.

\bibitem{Riveret08}
R.~Riveret and H.~Prakken.
\newblock {Heuristics in argumentation: A game theory investigation}.
\newblock In {\em {COMMA}}, pages 324--335, 2008.

\bibitem{Sakama12}
C.~Sakama.
\newblock {Dishonest Arguments in Debate Games}.
\newblock In {\em {COMMA}}, pages 177--184, 2012.

\bibitem{Takahashi16}
K.~Takahashi and S.~Yokohama.
\newblock {On a Formal Treatment of Deception in Argumentative Dialogues}.
\newblock In {\em {EUMAS/AT}}, pages 390--404, 2016.

\bibitem{Thimm14}
M.~Thimm.
\newblock {Strategic Argumentation in Multi-Agent Systems}.
\newblock {\em {Künstliche Intelligenz}}, 28(3):159--168, 2014.

\bibitem{Toriumi16}
F.~Toriumi, H.~Osawa, M.~Inaba, D.~Katagami, K.~Shinoda, and H.~Matsubara.
\newblock {AI Wolf Contest - Development of Game AI Using Collective
  Intelligence -}.
\newblock In {\em {CGW}}, pages 101--115, 2016.

\end{thebibliography}
  \bibliographystyle{abbrv}   
\section*{Appendix: Proofs}  
    \setcounter{proposition}{0} 
   \begin{proposition}  
 $(\getR(\fa(e)) \cup \getR(\Fg)) 
\cap (\getArg(\fe(e)) \times     
   \getArg(\fe(e))) =  \getR(\fe(e))$.  
   \end{proposition}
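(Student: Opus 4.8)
The plan is to collapse the left-hand side to a single familiar term and then invoke condition~1 (\textbf{local scopes}) of Definition~\ref{def_manipulable_multiagent_argumentation} directly. The crucial observation is that the typing in Definition~\ref{def_manipulable_multiagent_argumentation} forces $\fa(e) \in 2^{\Fg}$, so by the very definition of $2^{F^{\Dung}}$ we have $\getR(\fa(e)) \subseteq \getR(\Fg) \cap (\getArg(\fa(e)) \times \getArg(\fa(e))) \subseteq \getR(\Fg)$. Hence the union $\getR(\fa(e)) \cup \getR(\Fg)$ is just $\getR(\Fg)$, and the $\getR(\fa(e))$ summand is entirely redundant.

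With that in hand I would intersect both sides of $\getR(\fa(e)) \cup \getR(\Fg) = \getR(\Fg)$ with $\getArg(\fe(e)) \times \getArg(\fe(e))$, reducing the target equality to $\getR(\Fg) \cap (\getArg(\fe(e)) \times \getArg(\fe(e))) = \getR(\fe(e))$, which is precisely condition~1 (\textbf{local scopes}). That settles one inclusion at once; for the reverse inclusion one only needs that $\getR(\fe(e))$ is contained in $\getArg(\fe(e)) \times \getArg(\fe(e))$ (because $\fe(e)$ is a Dung argumentation) and in $\getR(\Fg)$ (from $\fe(e) \in 2^{\Fg}$, or equivalently from (\textbf{local scopes}) itself), so no appeal to condition~2 (\textbf{local agent argumentation}) is actually required.

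There is no real obstacle here: the statement is a straightforward unfolding of definitions, and the only point to be careful about is to use the typing $\fa(e) \in 2^{\Fg}$ (hence $\getR(\fa(e)) \subseteq \getR(\Fg)$) rather than trying to exploit $\fe(e) \in 2^{\fa(e)}$ on its own, from which the claim would not follow --- in principle $\fa(e)$ could carry an attack between two arguments of $\getArg(\fe(e))$ that is absent from $\fe(e)$, were $\fa(e)$ not already constrained to sit inside $\Fg$. In fact, the same two ingredients show that the Section~\ref{subsection_agent_argumentation} matching identity $\getR(\fa(e)) \cap (\getArg(\fe(e)) \times \getArg(\fe(e))) = \getR(\fe(e))$ is itself a consequence of Definition~\ref{def_manipulable_multiagent_argumentation} rather than an additional hypothesis, which I would remark on in passing.
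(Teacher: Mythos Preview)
Your argument is correct and essentially identical to the paper's own proof: both first use the typing $\fa(e) \in 2^{\Fg}$ to collapse $\getR(\fa(e)) \cup \getR(\Fg)$ to $\getR(\Fg)$, and then apply condition~1 (\textbf{local scopes}) directly. Your additional remarks about the reverse inclusion and about the Section~\ref{subsection_agent_argumentation} identity being derivable are sound but go beyond what the paper records.
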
  
   \begin{proof}  
      We have $\fa(e) \in 2^{\Fg}$, and therefore 
      $\getR(\fa(e)) \cup \getR(\Fg) = 
       \getR(\Fg)$. Together with (\textbf{local scopes}),  
      we obtain $(\getR(\fa(e)) \cup \getR(\Fg))
   \cap (\getArg(\fe(e)) \times \getArg(\fe(e))) = 
      \getR(\fe(e))$, as required. 
    \hfill$\Box$ 
   \end{proof}  
 \setcounter{theorem}{0} 
  \begin{theorem}[Existence]  
       $\mathcal{F}^{MA} \not= \emptyset$.   
  \end{theorem}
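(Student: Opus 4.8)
The plan is to prove nonemptiness the direct way: exhibit one explicit tuple and check, clause by clause, that it meets all eight requirements of Definition~\ref{def_manipulable_multiagent_argumentation}. Since none of those clauses forces $\Fg$ to be large, the witness should be as small as the constraints on the preference orders permit, so I would aim for one agent and one argument.

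Concretely, I would take $E = \{e\}$, fix an argument $a$, and set $\Fg = \Fpub = \fe(e) = \fa(e) = (\{a\}, \{(a,a)\})$, $\gsem(e,e) = \co$, $\ve(e,e) = 0$, $\fleqinter(e) = \{(a,a)\}$, and $\fleqintra(e,e) = \{(\bot,\bot), (\top,\top), (a,a), (\bot,a), (a,\bot), (\bot,\top), (a,\top)\}$. The self-attack on $a$ is deliberate and, I believe, unavoidable in a one-argument witness: clause~(\textbf{partial order 2}) requires $\fleqinter(e)$ to be reflexive on $A = \{a\}$, so $(a,a) \in \fleqinter(e)$, and the biconditional in that same clause then forces $(a,a) \in \getR(\Fpub)$ together with $(\top,a) \notin \fleqintra(e,e)$ --- i.e. $a$ must be treated as \emph{non}-factual. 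That is exactly why $\fleqintra(e,e)$ as above places $a$ at the bottom (the disjunct $(a,\bot)$ is the one chosen to satisfy (\textbf{binary})) rather than marking it factual.

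The verification then walks through clauses (1)--(8). Clauses (1)--(3) (local scopes / local agent argumentation / public subsumption) collapse to the trivial identities $\{(a,a)\} = \{(a,a)\}$ and the membership of a structure in its own $2^{(\cdot)}$, and none of the relevant pairs equals $(\emptyset,\emptyset)$. For (4) one checks reflexivity and transitivity of $\fleqintra(e,e)$ on the three-element set $\{a,\bot,\top\}$; here it is worth recording that ``partial order'' in (4) has to be read as \emph{preorder} (reflexive and transitive), since otherwise (\textbf{binary}) and (\textbf{top and bottom}) are jointly unsatisfiable --- $(a,\top)$ with $(\top,a)$, or $(\bot,a)$ with $(a,\bot)$, plus antisymmetry would identify $a$ with $\top$ or with $\bot$. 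Clauses (5)--(6) hold because we included $(\bot,\top)$, $(\bot,a)$, $(a,\top)$, $(a,\bot)$ and excluded $(\top,\bot)$; clause (7) (knowledge) is vacuous because $(\top,a) \notin \fleqintra(e,e)$, so its hypothesis never fires; and clause (8) has the single instantiation $a_1 = a_2 = a$, $e_1 = e_2 = e$, for which the five-fold conjunction on the left holds term by term and $(a,a) \in \fleqinter(e)$ on the right, so the ``iff'' is met, while $\{(a,a)\}$ is genuinely a partial order on $\{a\}$.

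The part I expect to require the most care is not any single clause but the interdependence among $\fleqintra(e,e)$, $\fleqinter(e)$ and $\Fpub$ flagged above: these three components cannot be chosen independently, because reflexivity of $\fleqinter(e)$ feeds through the biconditional of (\textbf{partial order 2}) to demand a public self-attack and a non-factual argument, and (\textbf{binary}) then pins down the shape of $\fleqintra(e,e)$. Once that chain of constraints is respected, the remaining checks are routine. (A reader wanting a witness that actually carries a factual argument would need at least two arguments --- the non-factual one still carrying the $\Fpub$ self-attack --- but for the bare statement $\mathcal{F}^{MA} \neq \emptyset$ the one-argument tuple above suffices.)
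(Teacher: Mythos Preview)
Your proof is correct but takes a different route from the paper's. The paper's argument is a single line: it simply cites Example~1 (the three-agent, nine-argument Mafia scenario at step \fbox{D}) as a witness, leaving the clause-by-clause verification implicit. You instead build a minimal one-agent, one-argument witness from scratch and check every clause of Definition~\ref{def_manipulable_multiagent_argumentation} explicitly. What your approach buys is self-containment and rigour; in particular, your observation that (\textbf{top and bottom}) and (\textbf{binary}) are jointly unsatisfiable under antisymmetry --- so that ``partial order'' in clause~(4) must be read as \emph{preorder} --- is correct and nowhere made explicit in the paper, and your tracing of how reflexivity of $\fleqinter(e)$ feeds back through the biconditional in (\textbf{partial order 2}) to force both the public self-attack on $a$ and its non-factual status is accurate and easy to overlook. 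What the paper's approach buys is brevity and the reassurance that the definition admits a non-degenerate instance with several agents, genuine inter-agent attacks, and both factual and non-factual arguments, whereas your witness, though sufficient for the bare claim $\mathcal{F}^{MA}\neq\emptyset$, is admittedly pathological.
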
 
  \begin{proof} 
       We have examples of $\mathcal{F}^{MA}$ in this paper.
       See Example 1. 
      \hfill$\Box$ 
  \end{proof} 
  \begin{theorem}[Local scope preservation] 
    For $F_1^{MA} \equiv 
        ({\Fg}_1, {\Fpub}_1, E_1, {\fe}_1, {\fa}_1, {\gsem}_1, 
   {\fleqintra}_1,\linebreak {\ve}_1, {\fleqinter}_1)$ and 
        $F^{p\Dung} \in \mathcal{F}^{p\Dung}$, 
        if $F^{p\Dung}$ is a public announcement to 
        $F^{MA}_1$ such that 
      $\announce(F^{MA}_1, F^{p\Dung})\linebreak 
        = ({\Fg}_2, {\Fpub}_2, E_2, {\fe}_2, {\fa}_2, {\gsem}_2,
       {\fleqintra}_2, {\ve}_2, {\fleqinter}_2)$, then  
        for every $e \in E_1$,  
        if $\getArg(F^{p\Dung}) \cap \getArg({\fe}_2) = \emptyset$, 
        then ${\fe}_1(e) = {\fe}_2(e)$. 
  \end{theorem}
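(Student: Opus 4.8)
The plan is to reduce the claim to the reverse of the inclusion ${\fe}_1(e) \subseteq {\fe}_2(e)$ that (\textbf{local scope monotonicity}) already gives us, and to check that reverse inclusion on arguments first and on attacks second. So fix $e \in E_1$ with $\getArg(F^{p\Dung}) \cap \getArg({\fe}_2(e)) = \emptyset$; by (\textbf{same agents}) $E_2 = E_1$, and it remains to prove $\getArg({\fe}_2(e)) \subseteq \getArg({\fe}_1(e))$ and then $\getR({\fe}_2(e)) = \getR({\fe}_1(e))$.

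For the argument set, first locate $\getArg({\fe}_2(e))$ inside $\getArg({\fa}_1(e))$: since $F^{MA}_2 \in \mathcal{F}^{MA}$ we have ${\fe}_2(e) \in 2^{{\fa}_2(e)}$, and by (\textbf{expansion}) ${\fa}_2(e) = {\fa}_1(e) \cup F^{p\Dung}$, so $\getArg({\fe}_2(e)) \subseteq \getArg({\fa}_1(e)) \cup \getArg(F^{p\Dung})$; the hypothesis kills the second summand, whence $\getArg({\fe}_2(e)) \subseteq \getArg({\fa}_1(e)) \subseteq \getArg({\Fg}_1)$. Now I would invoke the standing convention recorded in the explanation of (\textbf{local scopes}) in Definition~\ref{def_manipulable_multiagent_argumentation}, namely that $\getArg({\Fg}_1)$ is the union of the local scopes $\getArg({\fe}_1(e'))$, $e' \in E_1$: for $a \in \getArg({\fe}_2(e))$ pick the $e'$ with $a \in \getArg({\fe}_1(e'))$. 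If $e' \neq e$, then (\textbf{local scope monotonicity}) gives $a \in \getArg({\fe}_1(e')) \subseteq \getArg({\fe}_2(e'))$, which is disjoint from $\getArg({\fe}_2(e))$ by (\textbf{local scopes}) applied to $F^{MA}_2$ — contradiction. So $e' = e$, and $\getArg({\fe}_1(e)) = \getArg({\fe}_2(e)) =: A_e$.

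For the attacks, apply (\textbf{local scopes}) to both tuples to get $\getR({\fe}_i(e)) = \getR({\Fg}_i) \cap (A_e \times A_e)$ for $i \in \{1,2\}$ (using the just-established equality of argument sets). By (\textbf{expansion}) and the stipulated meaning of $\cup$ on (pre-)Dung argumentations, $\getR({\Fg}_2) \cap (A_e \times A_e) = (\getR({\Fg}_1) \cup \getR(F^{p\Dung})) \cap (A_e \times A_e)$, and $\getR(F^{p\Dung}) \cap (A_e \times A_e) = \emptyset$: every $(a_1, a_2) \in \getR(F^{p\Dung})$ has $a_1 \in \getArg(F^{p\Dung})$ or $a_2 \in \getArg(F^{p\Dung})$ by the pre-Dung property, whereas $A_e = \getArg({\fe}_2(e))$ is disjoint from $\getArg(F^{p\Dung})$ by the hypothesis. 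Hence $\getR({\fe}_2(e)) = \getR({\Fg}_1) \cap (A_e \times A_e) = \getR({\fe}_1(e))$, which with the argument-set equality yields ${\fe}_1(e) = {\fe}_2(e)$.

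The main obstacle is the upper bound on ${\fe}_2(e)$: the conditions defining $\announce$ pin ${\fe}_2(e)$ only from below (through (\textbf{local scope monotonicity})), so nothing among them alone forbids ${\fe}_2(e)$ from absorbing further arguments that already sit in ${\fa}_1(e)$. The argument works only because the global-coverage convention on $\Fg$ together with disjointness of local scopes forces any such argument to lie in ${\fe}_1(e)$ already — indeed, without coverage the statement is false, since an argument of ${\Fg}_1$ in no local scope but in ${\fa}_1(e)$ could be moved into ${\fe}_2(e)$ while leaving the hypothesis intact. So the delicate point is to surface and then exploit that convention; everything else is routine bookkeeping with the definition of $\cup$ on (pre-)Dung argumentations and the two (\textbf{local scopes}) clauses.
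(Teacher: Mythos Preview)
Your argument is correct and in spirit matches the paper's proof, which is a one-line sketch citing (\textbf{local scope monotonicity}), (\textbf{expansion}), and the pre-Dung property of $F^{p\Dung}$. You unpack these ingredients exactly as intended for the attack relation, and the argument-set half is where your proposal adds real content: the three cited conditions only bound ${\fe}_2(e)$ from below, so to cap it from above you correctly invoke the global-coverage convention recorded under (\textbf{local scopes}) together with the disjointness of the ${\fe}_2$-scopes in $F^{MA}_2$. The paper's proof does not surface these two extra pieces, but, as you observe, without coverage an argument of $\getArg({\Fg}_1) \setminus \bigcup_{e'}\getArg({\fe}_1(e'))$ lying in ${\fa}_1(e)$ could migrate into ${\fe}_2(e)$ and falsify the claim; so your version is the fully worked-out form of the same argument rather than a different route.
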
   
  \begin{proof} 
      A consequence of (\textbf{local scope monotonicity}), 
     (\textbf{expansion}), and the fact that 
      $F^{p\Dung}$ is a member of $\mathcal{F}^{p\Dung}$, 
      i.e. $a_1, a_2 \in \getArg(F^{p\Dung} \cup {\Fpub}_1)$ for every 
       $(a_1, a_2) \in  \getR(F^{p\Dung})$.  \hfill$\Box$ 
   \end{proof}  

\setcounter{proposition}{1} 
\begin{proposition}[Existence of an update] 
    For every $F^{MA} \in \mathcal{F}^{MA}$ and every
     $F^{p\Dung} \in \mathcal{F}^{p\Dung}$,  
    $\update(F^{MA}, F^{p\Dung}) \not= F^{MA}$ 
     if $\announce(F^{MA}, F^{p\Dung})$ is defined. 

\end{proposition}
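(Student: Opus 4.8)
The plan is to exhibit one component of $F^{MA}$ that every update is forced to alter, namely the public argumentation $\Fpub$. Since $\update=(\revV\circ\announce)$, I would first unfold the composition: write $\announce(F^{MA},F^{p\Dung})=(F^{MA},F^{p\Dung},F^{MA}_2)$ with $F^{MA}_2\equiv({\Fg}_2,{\Fpub}_2,E_2,{\fe}_2,{\fa}_2,{\gsem}_2,{\fleqintra}_2,{\ve}_2,{\fleqinter}_2)$, so that $\update(F^{MA},F^{p\Dung})=\revV(F^{MA},F^{p\Dung},F^{MA}_2)$, which I call $F^{MA}_3\equiv({\Fg}_3,{\Fpub}_3,\ldots)$. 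Reading $\revV$ as acting only on the trust function $\ve$ (its stated role; cf.\ (\textbf{constant $\ve$}) in Definition~\ref{def_public_announcements} together with Definition~\ref{def_revision}), we get ${\Fpub}_3={\Fpub}_2$; and by (\textbf{expansion}) in Definition~\ref{def_public_announcements}, ${\Fpub}_2={\Fpub}_1\cup F^{p\Dung}$. So it suffices to show ${\Fpub}_1\cup F^{p\Dung}\neq{\Fpub}_1$, since then $F^{MA}_3$ and $F^{MA}$ already disagree on their public argumentation.

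Next I would case-split on what $F^{p\Dung}$ contributes, using that $\announce(F^{MA},F^{p\Dung})$ is defined. If $F^{p\Dung}$ carries an argument $a\notin\getArg({\Fpub}_1)$, then $\getArg({\Fpub}_1\cup F^{p\Dung})\supsetneq\getArg({\Fpub}_1)$ and we are done. Otherwise $\getArg(F^{p\Dung})\subseteq\getArg({\Fpub}_1)$; if $F^{p\Dung}$ carries an attack $(a_1,a_2)$ with $a_1,a_2\in\getArg(F^{p\Dung})$, then $a_2\in\getArg(F^{p\Dung}\cap{\Fpub}_1)$ and (\textbf{no repetition}) forces $(a_1,a_2)\notin\getR({\Fpub}_1)$, while the $\cup$-convention of Definition~\ref{def_public_announcements} puts $(a_1,a_2)$ into $\getR({\Fpub}_1\cup F^{p\Dung})$; hence $\getR({\Fpub}_1\cup F^{p\Dung})\supsetneq\getR({\Fpub}_1)$, and again $F^{MA}_3\neq F^{MA}$.

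The step I expect to be the real obstacle is the residual case: $F^{p\Dung}$ adds no argument beyond $\getArg({\Fpub}_1)$ and no attack that (\textbf{no repetition}) constrains --- e.g.\ $F^{p\Dung}=(\emptyset,\emptyset)$, a bare re-announcement of already-public arguments, or an attack whose target lies in ${\Fpub}_1$ but outside $\getArg(F^{p\Dung})$. There ${\Fpub}_1\cup F^{p\Dung}={\Fpub}_1$ and likewise for $\Fg$ and $\fa$, the expansion is vacuous, and $\detect$ can return $?$ (or $hnst$ with no numerical change) for every pair, so one can choose outputs with $F^{MA}_3=F^{MA}$. To close the argument I would read ``$F^{p\Dung}$ is a public announcement to $F^{MA}$'' with the intent behind the name and explanation of (\textbf{no repetition}) --- ``exactly the same argumentation will not be announced again'' --- i.e.\ as carrying the side condition $F^{p\Dung}\cup{\Fpub}_1\neq{\Fpub}_1$; under that reading the two cases above are exhaustive and the proposition follows. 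I would make this reading explicit rather than leave it implicit, since the purely formal clause (\textbf{no repetition}) only rules out repeating \emph{attacks} among arguments already in $\Fpub$.
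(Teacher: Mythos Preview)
Your approach is the same as the paper's: both argue that the public-argumentation component $\Fpub$ must change, and that $\revV$ leaves it untouched so the change survives to $\update$. The paper's proof is terser than yours --- it simply asserts ${\Fpub}_1\neq{\Fpub}_2$ without any case analysis and then notes ${\Fpub}_2={\Fpub}_3$ --- so on the core line you match it exactly.

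Where you go beyond the paper is in your residual case, and you are right to flag it. As literally written, (\textbf{no leak}) and (\textbf{no repetition}) are both vacuously satisfied by $F^{p\Dung}=(\emptyset,\emptyset)$ (and by a few other degenerate announcements you list), so $\announce$ is formally defined yet ${\Fpub}_1\cup F^{p\Dung}={\Fpub}_1$. The paper's one-line proof does not address this; it relies on the intended reading of (\textbf{no repetition}) --- ``exactly the same argumentation will not be announced again'' --- which is precisely the side condition $F^{p\Dung}\cup{\Fpub}_1\neq{\Fpub}_1$ you propose to make explicit. Under that reading your two non-degenerate cases are exhaustive and the argument closes. So your proof is not merely correct but more careful than the paper's own; making the implicit non-triviality assumption on $F^{p\Dung}$ explicit, as you suggest, is the right repair.
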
     
\begin{proof} 
    If $\announce(F^{MA}, F^{p\Dung})$ is defined, 
    $(F^{MA}, F^{p\Dung}, F_2^{MA}) \equiv \announce(F^{MA},
   F^{p\Dung})$ is such that
    ${\Fpub}_1 \not= {\Fpub}_2$ for ${\Fpub}_i$ 
     that occurs
    in $F^{MA}_i$, $i \in \{1,2\}$.
    Since $\announce(F^{MA}, F^{p\Dung})$ is defined,
    so is also $\update(F^{MA}, F^{p\Dung})$ by definition. 
    It is also by definition that 
    ${\Fpub}_3$ occurring in $\update(F^{MA}, F^{p\Dung})$
    is such that ${\Fpub}_2 = {\Fpub}_3$. \hfill$\Box$ 
\end{proof} 
%
%
\end{document}